\relax
\documentclass[letterpaper]{article} 
\usepackage{aaai22preprint}  
\usepackage{times}  
\usepackage{helvet}  
\usepackage{courier}  
\usepackage[hyphens]{url}  
\usepackage{graphicx} 
\urlstyle{rm} 
\usepackage{natbib}  
\usepackage{caption} 
\DeclareCaptionStyle{ruled}{labelfont=normalfont,labelsep=colon,strut=off} 
\frenchspacing  
\setlength{\pdfpagewidth}{8.5in}  
\setlength{\pdfpageheight}{11in}  
%
\usepackage{algorithm}
\usepackage{algorithmic}
\usepackage{tcolorbox} 
\usepackage{xcolor}
\usepackage{multicol}
\usepackage{algorithm}
\usepackage{algorithmic}
\usepackage{amssymb} 
\usepackage{amsmath} 
\usepackage{graphicx}
\usepackage{multirow}
\usepackage{colortbl,booktabs} 
\usepackage{lipsum}   
\usepackage{amsthm} 
\usepackage{amsmath} 
\usepackage{arydshln} 

\newtheorem{theorem}{Theorem}[section]
\newtheorem{corollary}{Corollary}[section]
\newtheorem{lemma}{Lemma}[section]
\newtheorem{assumption}{Assumption}[section]

\newtheorem{remark}{Remark}[section]

\newcommand{\eg}{\textit{e.g.}\ }
\newcommand{\ie}{\textit{i.e.}\ }
\newcommand{\nb}{\textit{n.b.}\ }

\newcommand{\fix}[1]{\mathrm{fix}(#1)}
\newcommand{\pp}[2]{ \frac{\partial #1}{\partial #2} }
\newcommand{\dd}[2]{ \frac{\mathrm{d} #1}{\mathrm{d} #2} }
\newcommand{\itemsymbol}{{\small $\blacktriangleright$}}

\newcommand{\sD}{{\cal D}}
\newcommand{\sE}{{\cal E}}

\newcommand{\sJ}{{\cal J}}

\newcommand{\sN}{{\cal N}}

\newcommand{\sU}{{\cal U}}

\newcommand{\sY}{{\cal Y}}

\newcommand{\bbN}{{\mathbb N}} 
 
\newcommand{\bbR}{{\mathbb R}}
\newcommand{\bbE}{{\mathbb E}}

\usepackage{xcolor}
\definecolor{BLUE}{rgb}{0.3,0.3,0.9}
\definecolor{RED}{rgb}{0.8,0.05,0.05}
\definecolor{GREEN}{rgb}{0.05,0.5,0.05}
%
\usepackage{newfloat}
\usepackage{listings}
\lstset{%
	basicstyle={\footnotesize\ttfamily},
	numbers=left,numberstyle=\footnotesize,xleftmargin=2em,
	aboveskip=0pt,belowskip=0pt,%
	showstringspaces=false,tabsize=2,breaklines=true}
\floatstyle{ruled}
\newfloat{listing}{tb}{lst}{}
\floatname{listing}{Listing}

%
%
\pdfinfo{
/Title (JFB: Jacobian-Free Backpropagation for Implicit Networks)
/TemplateVersion (2022.1)
}

\setcounter{secnumdepth}{1} 

%


\title{JFB: Jacobian-Free Backpropagation for Implicit Networks}
\author{
 	Samy Wu Fung,\equalcontrib\textsuperscript{\rm 1}
 	Howard Heaton,\equalcontrib\textsuperscript{\rm 2}
 	Qiuwei Li,\textsuperscript{\rm 3}
 	Daniel McKenzie,\textsuperscript{\rm 4}
 	Stanley Osher,\textsuperscript{\rm 4}
 	Wotao Yin\textsuperscript{\rm 3}
     \\
}


\affiliations {
     \textsuperscript{\rm 1} Department of Applied Mathematics and Statistics, Colorado School of Mines  \\
     \textsuperscript{\rm 2} Typal Research, Typal LLC \\
     \textsuperscript{\rm 3} Alibaba Group (US), Damo Academy\\
     \textsuperscript{\rm 4} Department of Mathematics, University of California, Los Angeles \\
     swufung@mines.edu, \ 
     research@typal.llc,\ 
     li.qiuwei@alibaba-inc.com,\ 
     mckenzie@math.ucla.edu 
}



\usepackage{bibentry}

\begin{document}

\maketitle


\begin{abstract} 
    A promising trend in deep learning replaces traditional feedforward networks with implicit networks. Unlike traditional networks, implicit networks solve a fixed point equation to compute inferences. Solving for the fixed point varies in complexity, depending on provided data and an error tolerance. Importantly, implicit networks may be trained with fixed memory costs in stark contrast to feedforward networks, whose memory requirements scale linearly with depth. However, there is no free lunch --- backpropagation through implicit networks often requires solving a costly Jacobian-based equation arising from the implicit function theorem. We propose Jacobian-Free Backpropagation (JFB), a fixed-memory approach that circumvents the need to solve Jacobian-based equations.  JFB makes implicit networks faster to train and significantly easier to implement, without sacrificing test accuracy. Our experiments show implicit networks trained with JFB are competitive with feedforward networks and prior implicit networks given the same number of parameters.\footnote{All codes can be found on Github: \\{ \texttt{github.com/typal-research/jacobian\_free\_backprop}}}

\end{abstract}


\noindent A new direction has emerged from explicit to implicit neural networks  \cite{winston2020monotone,bai2019deep,bai2020multiscale,chen2018neural,ghaoui2019implicit,dupont2019ANODEs,jeon2021differentiable,zhang2020implicitly,lawrence2020almost,revay2020contracting,look2020differentiable,gould2019deep}. 
In the standard feedforward setting,  a network  prescribes a series of computations that map input data $d$ to an inference $y$. Networks can also explicitly leverage the assumption that high dimensional signals typically admit low dimensional representations in some latent space \cite{van2008visualizing,osher2017low,peyre2009manifold,elad2010role,udell2019big}. This may be done by designing the network to first map data to a latent space via a mapping $Q_\Theta$ and then apply a second mapping $S_\Theta$ to map the latent variable to the inference.
Thus, a traditional feedforward  $\sE_\Theta$ may take the compositional form
\begin{equation}
    \label{eq: explicit_network_definition}
    \sE_\Theta(d) = S_\Theta(Q_\Theta(d)),
\end{equation}
\begin{figure}[H]
    \centering
    \includegraphics[width=0.4\textwidth]{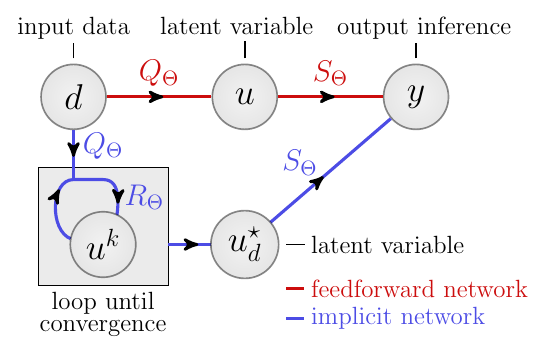} 
    \caption{Feedforward networks act by computing $S_\Theta\circ Q_\Theta$. Implicit networks add a fixed point condition using $R_\Theta$. When $R_\Theta$ is contractive (more generally: averaged) repeatedly applying $R_\Theta$ to update a latent variable $u^k$ converges to a fixed point $u^\star=R_\Theta(u^\star; Q_\Theta(d))$. 
    }
    \label{fig: explicit-implicit-comparison}
\end{figure}
\noindent which is illustrated by the red arrows in Figure \ref{fig: explicit-implicit-comparison}.
One can allow for computation in the latent space $\sU$ by introducing a self-map $R_{\Theta}(\cdot; Q_{\Theta}(d))$
and the iteration 
\begin{equation}
    u^{k+1} = R_\Theta(u^{k}; Q_\Theta(d)). \label{eq:LatentSpaceIteration}
\end{equation}
Iterating $k$ times may be viewed as a weight-tied, input-injected network, where each feedforward step applies $R_{\Theta}$ \cite{bai2019deep}. As $k \to \infty$, {\em i.e.} the latent space portion becomes deeper,  the limit of \eqref{eq:LatentSpaceIteration} yields a {\em fixed point equation}. Implicit networks capture this ``infinite depth'' behaviour by using $R_{\Theta}(\cdot \ ; Q_{\Theta}(d))$ to define a fixed point condition rather than an explicit computation:
\begin{equation}
    \label{eq: implicit-model-definition}
    \sN_\Theta(d) \triangleq S_\Theta(u_d^\star)
    \ \ \ \mbox{where} \ \ \ 
    u_d^\star = R_\Theta(u_d^\star, Q_\Theta(d)),
\end{equation}
as shown by blue in Figure~\ref{fig: explicit-implicit-comparison}. Special cases of the network in \eqref{eq: implicit-model-definition} recover architectures introduced in prior works:
\begin{itemize}
    \item[$\vartriangleright$] Taking $S_{\Theta}$ to be the identity recovers the well-known Deep Equilibrium Model (DEQ) \cite{bai2019deep,bai2020multiscale}.
    \item[$\vartriangleright$] Choosing $S_{\Theta}$ as the identity, $Q_{\Theta}$ to be an affine map and $R_{\Theta}(u,Q_{\Theta}(d)) = \sigma(Wu + Q_{\Theta}(d))$ yields Monotone Operator Networks \cite{winston2020monotone} as long as $W$ and $\sigma$ satisfy  additional conditions. Allowing $S_{\Theta}$ to be linear yields the model proposed in \cite{ghaoui2019implicit}.
\end{itemize}
Three immediate questions arise from \eqref{eq: implicit-model-definition}:
\begin{itemize}
    \item[\itemsymbol] Is the definition in \eqref{eq: implicit-model-definition} well-posed?
    \item[\itemsymbol] How is $\sN_{\Theta}(d)$ evaluated?
    \item[\itemsymbol] How are the weights $\Theta$ of $\sN_{\Theta}$ updated during training?
\end{itemize}
Since the first two points are well-established~\cite{winston2020monotone,bai2019deep}, we briefly review these in Section~\ref{sec: fixed-point-framework} and focus on the third point. Using gradient-based methods for training  
requires computing $d\sN_{\Theta}\big/d\Theta$, and in particular, $du_d^\star\big/d\Theta$. Hitherto,   previous works   computed $du_d^\star\big/d\Theta$ by solving a Jacobian-based equation (see Section~\ref{sec: backpropagation}). Solving this linear system is computationally expensive and prone to instability, particularly when the dimension of the latent space is large and/or includes certain structures (\eg batch normalization and/or dropout)~\cite{bai2019deep,bai2020multiscale}. 

Our primary contribution is a new and simple {\bf Jacobian-Free Backpropagation} (JFB) technique for training implicit networks that avoids {\em any} linear system solves. 
Instead, our scheme backpropagates by omitting the Jacobian term, resulting in a form of preconditioned gradient descent. JFB yields much faster training of implicit networks and allows for a wider array of architectures.

\section{Why Implicit Networks?} \label{sec: why-implicit}
Below, we discuss several advantages of implicit networks over explicit, feedforward networks. 

\paragraph{Implicit networks for implicitly defined outputs} In some applications, the desired network output is most aptly described implicitly as a fixed point, not via an explicit function. As a toy example, consider predicting the variable $y\in\bbR$ given $d \in [-1/2,1/2]$  when $(d,y)$ is known to satisfy  
\begin{equation}
\begin{aligned}
    \label{eq: toy_manifold}
    y = d + y^5.
\end{aligned}
\end{equation}
Using $y_1 = 0$ and the iteration
\begin{equation}
    \label{eq: toy_Tk}
    y_{k+1} = T(y_k;d) \triangleq d + y_k^5, \ \ \ \mbox{for all $k\in\bbN$,}
\end{equation}
one obtains $y_k\rightarrow y$. 
In this setting, $y$ is exactly (and implicitly) characterized by $y = T(y,d)$. On the other hand, an explicit solution to~\eqref{eq: toy_manifold} requires an infinite series representation, unlike the simple formula $T(y,d) = d + y^5$.
See Appendix~\ref{app: toy_example} for further details.
Thus, it can be simpler and more appropriate to model a relationship implicitly.
For example, in areas as diverse as game theory and inverse problems, the output of interest may naturally be characterized as the fixed point to an operator parameterized by the input data $d$. Since implicit networks find fixed points by design, they are well-suited to such problems as shown by   recent works \cite{heaton2021feasibility,heaton2021learn,gilton2021deep}.

\paragraph{``Infinite depth'' with constant memory training} 
As mentioned, solving for the fixed point of $R_{\Theta}(\cdot\ ;Q_{\Theta}(d))$ is analogous to a forward pass through an ``infinite depth'' (in practice, very deep) weight-tied, input injected feedforward network. However, implicit networks do not need to store intermediate quantities of the forward pass  
for backpropagation. 
Consequently, implicit networks are trained using \textit{constant memory costs} with respect to depth -- relieving a major bottleneck of training deep networks.

\paragraph{No loss of expressiveness} 
Implicit networks as defined in \eqref{eq: implicit-model-definition} are at least as expressive as feedforward networks. This can easily be observed by setting $R_\Theta$ to simply return $Q_\Theta$; in this case, the implicit  $\sN_\Theta$ reduces to the feedforward  $\sE_\Theta$ in~\eqref{eq: explicit_network_definition}. More interestingly, the class of implicit networks in which $S_{\Theta}$ and $Q_{\Theta}$ are constrained to be affine maps contains all feedforward networks, and is thus at least as expressive \cite{ghaoui2019implicit}, \cite[Theorem 3]{bai2019deep}. Universal approximation properties of implicit networks then follow immediately from such properties of conventional deep neural models (\eg see \cite{csaji2001approximation,lu2017expressive,kidger2020universal}). \\

\noindent We also mention a couple limitations of implicit networks. 

\paragraph{Architectural limitations} As discussed above, in theory given any feedforward network one may write down an implicit network yielding the same output (for all inputs). In practice, evaluating the implicit network requires finding a fixed point of $R_{\Theta}$. The fixed point finding algorithm then places constraints on $R_{\Theta}$ (\eg Assumption~\ref{ass: T-contraction}). Guaranteeing the existence and computability of $d\sN_{\Theta}\big/d\Theta$ places further constraints on $R_{\Theta}$. For example, if Jacobian-based backpropagation is used, $R_{\Theta}$ cannot contain batch normalization \cite{bai2019deep}.

\paragraph{Slower inference} Once trained, inference with an implicit network requires solving for a fixed point of $R_{\Theta}$. Finding this fixed point using an iterative algorithm requires evaluating $R_{\Theta}$ repeatedly and, thus, is often slower than inference with a feedforward network.

\section{Implicit Network Formulation} \label{sec: fixed-point-framework}
All terms  presented in this section are provided in a general context, which is later made concrete  for each application. We include a subscript $\Theta$ on various terms to emphasize   the indicated mapping will ultimately be parameterized in terms of tunable weights\footnote{We use the same subscript for all terms, noting each operator typically depends on a portion of the weights.} $\Theta$.
At the highest level, we are interested in constructing a neural network $\sN_\Theta:\sD\rightarrow\sY$ that maps from a data space\footnote{Each space is assumed to be a real-valued finite dimensional Hilbert space (\eg $\mathbb{R}^n$) endowed with a product $\left<\cdot,\cdot\right>$ and norm $\|\cdot\|$. It will be clear from context which space is being used.} $\sD$ to an inference space $\sY$. The implicit portion of the network uses a latent space $\sU$, and data is mapped to this latent space by $Q_\Theta\colon \sD \to \sU$. We define the \textit{network operator}  $T_\Theta: \sU\times\sD\rightarrow\sU$ by 
\begin{equation}
   T_\Theta(u; d) \triangleq R_\Theta(u, Q_\Theta(d)).
   \label{eq: T-composition-R-Q}
\end{equation}
Provided input data $d$, our aim is to find the unique fixed point $u_d^\star$ of $T_\Theta(\cdot\ ; d)$ and then map $u_d^\star$ to the inference space $\sY$ via a final mapping $S_\Theta:\sU\rightarrow\sY$. This enables us to define an implicit network $\sN_\Theta$ by
\begin{equation}
    \sN_\Theta(d) \triangleq S_\Theta(u_d^\star)
    \ \ \ \mbox{where} \ \ \  
    u_d^\star = T_\Theta(u_d^\star; d).
    \label{eq: FPN-fixed-point-def}
\end{equation}
\begin{minipage}{0.45\textwidth}
\begin{algorithm}[H]
\caption{Implicit Network  with Fixed Point Iteration}
\label{alg: FPN_Abstract}
\begin{algorithmic}[1]           
    \STATE{\begin{tabular}{p{0.475\textwidth}r}
     \hspace*{-8pt}  $\sN_\Theta(d)\colon$
     &  
     $\vartriangleleft$ Input data is $d$
     \end{tabular}}    
    
    \STATE{\begin{tabular}{p{0.475\textwidth}r}
     \hspace*{0pt} $u^1 \leftarrow \hat{u}$
     &  
     $\vartriangleleft$ Assign latent term
     \end{tabular}}        

    \STATE{\begin{tabular}{p{0.475\textwidth}r}
     \hspace*{-1pt} {\bf while} $\|u^k - T_\Theta(u^k;d)\| > \varepsilon$\hspace*{-20pt}\ 
     &  
     $\vartriangleleft$ Loop til converge
     \end{tabular}}  
     
    \STATE{\begin{tabular}{p{0.475\textwidth}r}
     \hspace*{3pt} $u^{k+1} \leftarrow T_\Theta(u^k;d)$
     &  
     $\vartriangleleft$ Refine latent term
     \end{tabular}}      

    \STATE{\begin{tabular}{p{0.475\textwidth}r}
     \hspace*{3pt} $k\leftarrow k+1$
     &  
     $\vartriangleleft$ Increment counter
     \end{tabular}}       

    \STATE{\begin{tabular}{p{0.475\textwidth}r}
     \hspace*{-8pt} {\bf return} $S_\Theta(u^k)$  
     &  
     $\vartriangleleft$ Output \textit{estimate}
     \end{tabular}}   
\end{algorithmic}
\end{algorithm}  
\vspace{0pt}
\end{minipage}
Implementation considerations for $T_\Theta$ are discussed below. We also introduce  assumptions on $T_\Theta$ that yield sufficient conditions to use the simple procedure in Algorithm \ref{alg: FPN_Abstract} to approximate $\sN_\Theta(d)$. In this algorithm, the latent variable initialization $\hat{u}$ can be any fixed quantity (\eg the zero vector). The inequality in Step 3 gives a fixed point residual condition that measures convergence. Step 4 implements a fixed point update. The estimate of the inference $\sN_\Theta(d)$ is computed by applying $S_\Theta$ to the latent variable $u^k$ in Step 6.  The blue path in Figure \ref{fig: explicit-implicit-comparison} visually summarizes Algorithm~\ref{alg: FPN_Abstract}.

\begin{figure*}
    \centering
    \includegraphics[width=1\textwidth]{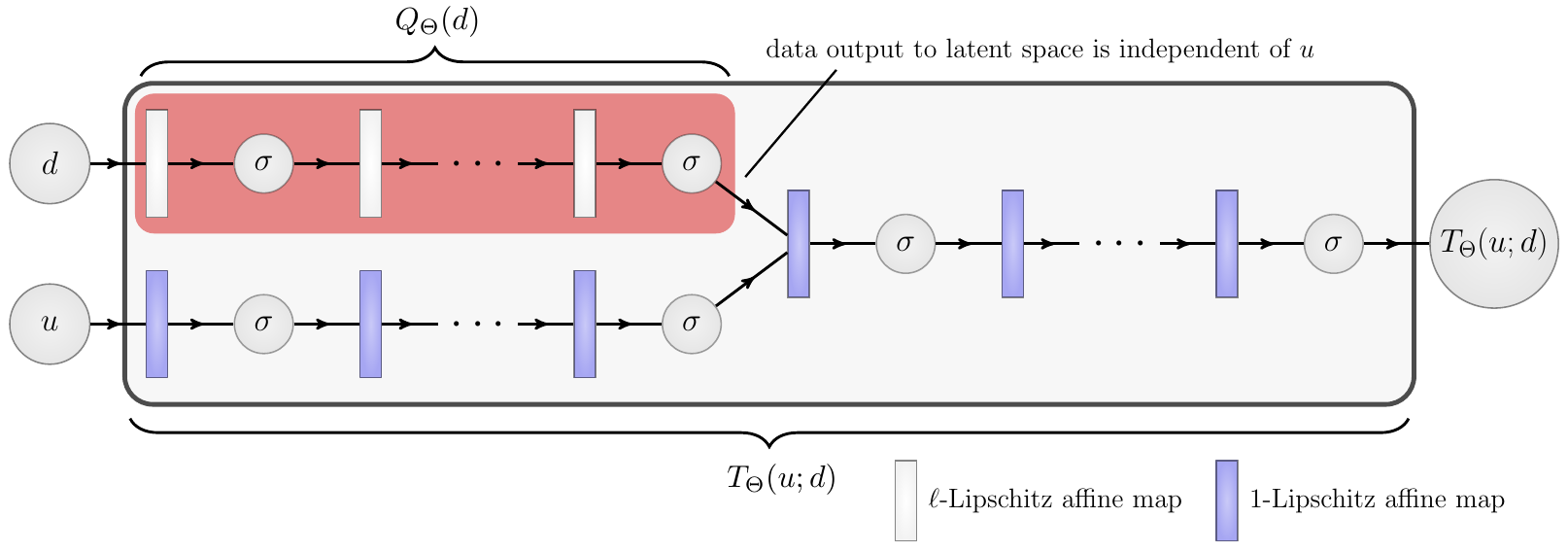}
    \caption{Diagram of a possible architecture for network operator $T_\Theta$ (in large rectangle). Data $d$ and latent $u$ variables are processed in two streams by nonlinearities (denoted by $\sigma$) and affine mappings (denoted by rectangles). These streams merge into a final stream that may also contain transformations. Light gray and blue affine maps are   $\ell$-Lipschitz and 1-Lipschitz, respectively. The mapping $Q_\Theta$ from data space to latent space is enclosed by the red rectangle.  
    }
    \label{fig: data-streams}
\end{figure*} 

\paragraph{Convergence} Finitely many loops in Steps 3 and 4 of Algorithm \ref{alg: FPN_Abstract} is guaranteed by a classic functional analysis result \cite{banach1922operations}. This approach is used by several implicit networks~\cite{ghaoui2019implicit,winston2020monotone,jeon2021differentiable}. 
Below we present a variation of Banach's result for our setting.

\begin{assumption} \label{ass: T-contraction}
    The mapping $T_\Theta$ is $L$-Lipschitz with respect to its inputs $(u,d)$, \ie,
    \begin{equation}
        \begin{split}
        \| &T_\Theta(u;\ d) - T_\Theta(v;\ w)\| \leq L \| (u,d) - (v,w)\|, 
        \end{split}
    \end{equation}
    for all $(u,d),(v,w)\in\sU\times\sD$.
    Holding $d$ fixed, the operator $T_\Theta(\cdot; d)$ is a contraction, \ie  there exists $\gamma \in [0,1)$ such that
    \begin{equation}
        \|T_\Theta(u;d) - T_\Theta(v;d)\| \leq \gamma \| u-v\|,
        \ \ \ \mbox{for all $u,v\in\sU$.}
        \label{eq: T-u-gamma-contraction}
    \end{equation}
\end{assumption}

\begin{remark}
    The $L$-Lipschitz condition on $T_\Theta$  is used since recent works show Lipschitz continuity with respect to inputs improves generalization \cite{sokolic2017robust,Gouk2021Regularisation,finlay2018lipschitz} and adversarial robustness \cite{cisse2017parseval,anil2019sorting}. 
\end{remark}

\begin{theorem} {\sc (Banach)}
    For any $u^1 \in \sU$, if the sequence $\{u^k\}$ is generated via the update relation
    \begin{equation}
        u^{k+1} = T_\Theta(u^k;\ d), \ \ \ \mbox{for all $k\in\bbN$},
    \end{equation}
    and if Assumption \ref{ass: T-contraction} holds, then $\{u^k\}$ converges linearly to the unique fixed point $u_d^\star$ of $T_\Theta(\cdot;d)$. 
\end{theorem}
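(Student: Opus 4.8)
The plan is to prove the Banach fixed point theorem in the standard way, adapted to the notation of the excerpt. This is a classical result, so the "obstacle" is really just bookkeeping, but I'll lay out the steps.

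\medskip

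\noindent\textbf{Proof proposal.} First I would establish that the iterates form a Cauchy sequence. Applying \eqref{eq: T-u-gamma-contraction} repeatedly, one gets $\|u^{k+1} - u^k\| = \|T_\Theta(u^k;d) - T_\Theta(u^{k-1};d)\| \leq \gamma \|u^k - u^{k-1}\| \leq \cdots \leq \gamma^{k-1}\|u^2 - u^1\|$. Then for any $m > k$, the triangle inequality over the telescoping differences gives $\|u^m - u^k\| \leq \sum_{j=k}^{m-1}\|u^{j+1}-u^j\| \leq \left(\sum_{j=k}^{m-1}\gamma^{j-1}\right)\|u^2-u^1\| \leq \frac{\gamma^{k-1}}{1-\gamma}\|u^2-u^1\|$, using the geometric series bound and $\gamma \in [0,1)$. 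Since $\gamma^{k-1} \to 0$, the right-hand side vanishes as $k\to\infty$, so $\{u^k\}$ is Cauchy; as $\sU$ is a finite-dimensional Hilbert space it is complete, hence $u^k \to u^\star_d$ for some $u^\star_d \in \sU$.

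\medskip

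\noindent Next I would verify $u^\star_d$ is a fixed point. The contraction bound in particular makes $T_\Theta(\cdot;d)$ continuous, so passing to the limit in $u^{k+1} = T_\Theta(u^k;d)$ yields $u^\star_d = T_\Theta(u^\star_d;d)$. For uniqueness, suppose $v$ also satisfies $v = T_\Theta(v;d)$; then $\|u^\star_d - v\| = \|T_\Theta(u^\star_d;d) - T_\Theta(v;d)\| \leq \gamma\|u^\star_d - v\|$, and since $\gamma < 1$ this forces $\|u^\star_d - v\| = 0$, i.e. $v = u^\star_d$.

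\medskip

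\noindent Finally, for the $R$-linear rate, I would pass to the limit $m\to\infty$ in the Cauchy estimate above to obtain $\|u^k - u^\star_d\| \leq \frac{\gamma^{k-1}}{1-\gamma}\|u^2 - u^1\|$; equivalently, $\limsup_{k\to\infty}\|u^k - u^\star_d\|^{1/k} \leq \gamma < 1$, which is the definition of $R$-linear convergence. I don't anticipate a genuine obstacle here — the only things to be careful about are: invoking completeness of $\sU$ (guaranteed by the finite-dimensional Hilbert space assumption stated in the footnote), and noting that the $L$-Lipschitz hypothesis on the joint inputs $(u,d)$ plays no role in this particular statement — only the partial contraction \eqref{eq: T-u-gamma-contraction} in the $u$ argument with $d$ held fixed is used.
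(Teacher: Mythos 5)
Your proof is correct and is precisely the classical argument (Cauchy sequence via the geometric series, completeness of the finite-dimensional Hilbert space $\sU$, continuity for the fixed-point property, contraction for uniqueness, and the tail bound $\|u^k - u_d^\star\| \leq \tfrac{\gamma^{k-1}}{1-\gamma}\|u^2-u^1\|$ for the $R$-linear rate). The paper does not supply its own proof — it simply cites Banach's original result — and your argument is exactly the standard one being invoked, including the correct observation that only the contraction in $u$ (not the joint $L$-Lipschitz condition) is needed.
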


\paragraph{Alternative Approaches} In \cite{bai2019deep, bai2020multiscale} Broyden's method is used for finding $u_d^\star$. Broyden's method is a quasi-Newton scheme and so at each iteration it updates a stored approximation to the Jacobian $J_k$ and then solves a linear system in $J_k$. Since in this work our goal is to explore truly {\em Jacobian-free} approaches, we stick to the simpler fixed point iteration scheme when computing $\tilde{u}$ ({\em i.e.} Algorithm~\ref{alg: FPN_Abstract}). In the contemporaneous \cite{gilton2021deep}, it is reported that using fixed point iteration in conjunction with Anderson acceleration finds $\tilde{u}$ faster than both vanilla fixed point iteration and Broyden's method. Combining JFB with Anderson accelerated fixed point iteration is a promising research direction we leave for future work. 

\paragraph{Other Implicit Formulations} A related implicit learning formulation is the well-known neural ODE model~\cite{chen2018neural,dupont2019ANODEs,ruthotto2021introduction}.
Neural ODEs leverage known connections between deep residual models and discretizations of differential equations~\cite{haber2017stable,weinan2017proposal,ruthotto2019deep,chang2018reversible,finlay2020train,lu2018beyond}, and replace these discretizations by black-box ODE solvers in forward and backward passes.
The implicit property of these models arise from their method for computing gradients.
Rather than backpropagate through each layer, backpropagation is instead done by solving the adjoint equation~\cite{jameson1988aerodynamic} using a blackbox ODE solver as well. This is analogous to solving the Jacobian-based equation when performing backpropagation for implicit networks (see \eqref{eq: adjoint_equation}) and allows the user to alleviate the memory costs of backpropagation through deep neural models by solving the adjoint equation at additional computational costs. A drawback is that the adjoint equation must be solved to high-accuracy; otherwise, a descent direction is not necessarily guaranteed~\cite{gholami2019anode,onken2020discretize,onken2021ot}.  

\section{Backpropagation} \label{sec: backpropagation}
We present a simple way to backpropagate with implicit networks, called Jacobian-free backprop (JFB). Traditional backpropagation will \textit{not} work effectively for implicit networks since  forward propagation during training could   entail hundreds or thousands of iterations, requiring ever growing memory to store computational graphs. 
On the other hand, implicit models maintain fixed memory costs by backpropagating ``through the fixed point'' and solving a Jacobian-based equation (at potentially substantial added computational costs).
The key step to circumvent this Jacobian-based equation with JFB is to tune weights by  using a preconditioned gradient.   
Let $\ell:\sY \times \sY \rightarrow \bbR$ be a smooth loss function, denoted by $\ell(x,y)$, and  consider the  training  problem 
\begin{equation} 
    \min_{\Theta} \bbE_{d\sim\sD} \big[ \ell\left( y_d, \sN_\Theta(d) \right) \big],
    \label{eq: training-problem}
\end{equation}
where we abusively write $\sD$ to also mean a distribution.
For clarity of presentation, in the remainder of this section  we notationally suppress the dependencies on  weights $\Theta$ by letting $u_d^\star$ denote the fixed point in (\ref{eq: FPN-fixed-point-def}). Unless noted otherwise, mapping arguments are implicit in this section; in each implicit case, this will correspond to   entries in (\ref{eq: FPN-fixed-point-def}). 
We begin with   standard assumptions enabling us to differentiate   $\sN_\Theta$.

\begin{assumption} \label{ass: smooth-mappings}
    The   mappings $S_\Theta$ and  $T_\Theta$ are continuously differentiable with respect to $u$ and $\Theta$. 
\end{assumption}

\begin{assumption} \label{ass: independent-theta}
    The weights $\Theta$ may be written as a tuple $\Theta = (\theta_S, \theta_T)$ such that weight paramaterization of  $S_\Theta$ and $T_\Theta$ depend only on $\theta_S$ and $\theta_T$, respectively.\footnote{This assumption is easy to ensure in practice. For notational brevity, we use the subscript $\Theta$ throughout.} 
\end{assumption}

Let $\sJ_\Theta$ be defined as the identity operator, denoted by $\mathrm{I}$, minus the Jacobian\footnote{Under Assumption \ref{ass: T-contraction}, the Jacobian $\sJ_\Theta$ exists almost everywhere. However,   presentation is   cleaner by assuming smoothness.} of $T_\Theta$ at $(u,d)$, \ie
\begin{equation}
    \label{eq: Jacobian_def}
    \sJ_\Theta(u;d)   \triangleq \mathrm{I} - \dd{T_\Theta}{u}(u;d).
\end{equation} 
Following 
\cite{winston2020monotone,bai2019deep}, we differentiate both sides of the fixed point relation in (\ref{eq: FPN-fixed-point-def})  to obtain, by the implicit function theorem,
\begin{equation}
    \dfrac{\mathrm{d} u_d^\star }{\mathrm{d} \Theta}
    = \dfrac{\partial T_\Theta}{\partial u} \dfrac{\mathrm{d} u_d^\star}{\mathrm{d}\Theta}
    + \dfrac{\partial T_\Theta}{\partial \Theta}
    \ \ \ \Longrightarrow \ \ \ 
    \dfrac{\mathrm{d} u_d^\star }{\mathrm{d} \Theta}
    = \sJ_\Theta^{-1} \cdot \pp{T_\Theta}{\Theta},
    \label{eq: adjoint_equation}
\end{equation}  
where $\sJ_\Theta^{-1}$ exists whenever $\sJ_\Theta$ exists (see Lemma \ref{lemma: J-coercive}). Using the chain rule gives the   loss gradient
\begin{equation}
    \begin{split}
        \dd{}{\Theta}\left[ \ell(y_d, \sN_\Theta(d)) \right]
        &= \dd{}{\Theta}\Big[ \ell(y_d, S_\Theta(T_\Theta(u_d^\star,d))\Big]
        \\
        &=  \pp{\ell}{y} \left[ \dd{S_\Theta}{u} \sJ_\Theta^{-1} \pp{T_\Theta}{\Theta} + \pp{S_\Theta}{\Theta}\right].
    \end{split}
    \label{eq: standard-implicit-gradient}
\end{equation}
The matrix  $\sJ_\Theta$ satisfies the   inequality (see Lemma \ref{lemma: J-coercive}) 
\begin{equation}
    \left< u, \sJ_\Theta^{-1}u\right> \geq \dfrac{1-\gamma}{(1+\gamma)^2}\|u\|^2,\  \ \ \mbox{for all $u\in\sU$.}
\end{equation}
Intuitively, this coercivity property makes it seem possible  to remove $\sJ_\Theta^{-1}$ from   (\ref{eq: standard-implicit-gradient}) and   backpropagate using
\begin{equation}
    \begin{split}
    p_\Theta &\triangleq  - \dd{}{\Theta}\Big[ \ell( y_d, S_\Theta(T_\Theta(u, d)) \Big]_{u=u_d^\star} 
    \\
    &=-  \pp{\ell}{y} \left[ \dd{S_\Theta}{u}   \pp{T_\Theta}{\Theta} + \pp{S_\Theta}{\Theta}\right].
    \end{split}
\end{equation}

The omission of $\sJ_\Theta^{-1}$ admits two straightforward interpretations. 
Note  $\sN_\Theta(d) = S_\Theta(T_\Theta(u_d^\star;d ))$, and so $p_\Theta$ is precisely the gradient of the expression
$
    \ell(y_d, S_\Theta(T_\Theta(u_d^\star;d ))),
$
treating $u_d^\star$ as a constant \textit{independent} of $\Theta$. The distinction is that using    $S_\Theta(T_\Theta(u_d^\star;d ))$ assumes, perhaps by chance, the user chose the first iterate $u^1$ in their fixed point iteration (see Algorithm \ref{alg: FPN_Abstract}) to be precisely the fixed point $u_d^\star$. This makes the iteration trivial, ``converging'' in one iteration. We can simulate this behavior by using the fixed point iteration to find $u_d^\star$ and   only backpropagating through the final step of the fixed point iteration, as shown in Figure \ref{fig: backprop}.

Since the weights $\Theta$ typically lie in a space of much higher dimension than the latent space $\sU$, the Jacobians $\partial S_\Theta / \partial \Theta$ and $\partial T_\Theta / \partial \Theta$   effectively always have full column rank. 
We leverage this fact via the following assumption.  

\begin{assumption} \label{ass: good-conditioning}
    Under Assumption \ref{ass: independent-theta}, given any weights $\Theta = (\theta_S, \theta_T)$ and data $d$, the matrix
    \begin{equation}
        M \triangleq \  \left[ \begin{array}{cc} \pp{S_\Theta}{\theta_S}  & 0 \\ 0 & \pp{T_\Theta}{\theta_T} \end{array}\right]
        \label{eq: M-def-gradient}
    \end{equation}
    has full column rank and is sufficiently well conditioned to satisfy the inequality\footnote{The term $\gamma$ here refers to the contraction factor in (\ref{eq: T-u-gamma-contraction}).}
    \begin{equation}
        \kappa( M^\top M) = \dfrac{\lambda_{\mathrm{max}}(M^\top M)}{\lambda_{\mathrm{min}}(M^\top M)} \leq \dfrac{1}{\gamma}.
        \label{eq: gradient-conditioning}
    \end{equation}
\end{assumption}

\begin{remark}
    The conditioning portion of the above assumption  is useful for bounding the worst-case behavior in our analysis. However, we found it unnecessary to enforce this in our experiments  for effective training (\eg see Figure \ref{fig: comparison-CIFAR10}), which we hypothesize is justified because   worst case behavior rarely occurs in practice and we train using   averages of $p_\Theta$ for samples drawn from   large data sets.
\end{remark}

Assumption \ref{ass: good-conditioning} gives rise to a second interpretation of JFB. Namely, 
the full column rank of $M$ enables us to rewrite $p_\Theta$ as a preconditioned gradient, \ie 
\begin{equation}
    p_\Theta = \underbrace{\left( M \left[ \begin{array}{cc} \mathrm{I} & 0 \\ 0 & \sJ_\Theta \end{array}\right] M^+ \right)}_{\mbox{preconditioning term}} \dd{\ell}{\Theta},
\end{equation}
where $M^+$ is the Moore-Penrose pseudo inverse \cite{moore1920reciprocal,penrose1955generalized}.
These insights lead to our main result.

\begin{theorem} \label{thm: backprop-descent}    
 If  Assumptions \ref{ass: T-contraction}, \ref{ass: smooth-mappings}, \ref{ass: independent-theta}, and \ref{ass: good-conditioning} hold for given weights $\Theta$ and data $d$,    then
    \begin{equation}
        p_\Theta \triangleq 
        - \dd{}{\Theta}\Big[ \ell(y_d, S_\Theta(T_\Theta(u,d)) \Big]_{u=u_d^\star} 
        \label{eq: descent-direction}
    \end{equation} 
    is a descent direction for $\ell(y_d,\sN_\Theta(d))$ with respect to $\Theta$.
\end{theorem}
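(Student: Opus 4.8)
The plan is to verify the defining property of a descent direction: $\big\langle \dd{}{\Theta}\ell(y_d,\sM_\Theta(d)),\ p_\Theta\big\rangle < 0$ whenever this loss gradient is nonzero (when it vanishes there is nothing to prove and no descent direction exists). First I would express both vectors in the block coordinates $\Theta=(\theta_S,\theta_T)$ of Assumption~\ref{ass: independent-theta}. Write $v\triangleq\nabla_y\ell$, $S_u\triangleq\dd{S_\Theta}{u}$, $S_\theta\triangleq\pp{S_\Theta}{\theta_S}$, $T_\theta\triangleq\pp{T_\Theta}{\theta_T}$, all evaluated at the fixed point $u_d^\star$. Transposing~\eqref{eq: standard-implicit-gradient} and using that $\partial S_\Theta/\partial\Theta$ (resp.\ $\partial T_\Theta/\partial\Theta$) is supported on the $\theta_S$ (resp.\ $\theta_T$) block, the loss gradient has blocks $\big(S_\theta^\top v,\ T_\theta^\top\sJ_\Theta^{-\top}S_u^\top v\big)$, whereas $p_\Theta$ in~\eqref{eq: descent-direction} has blocks $\big(-S_\theta^\top v,\ -T_\theta^\top S_u^\top v\big)$. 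Introducing $w\triangleq\sJ_\Theta^{-\top}S_u^\top v\in\sU$ (well defined since $\sJ_\Theta=\mathrm{I}-\dd{T_\Theta}{u}$ is invertible under Assumption~\ref{ass: T-contraction}; cf.\ Lemma~\ref{lemma: J-coercive}), we have $S_u^\top v=\sJ_\Theta^\top w$, so the $\theta_T$ block of $p_\Theta$ equals $-T_\theta^\top\sJ_\Theta^\top w$, and the inner product decouples across the two blocks with no cross term:
\begin{equation}
  \left\langle \dd{}{\Theta}\ell(y_d,\sM_\Theta(d)),\ p_\Theta\right\rangle
  \ =\ -\,\big\|S_\theta^\top v\big\|^2\ -\ \big\langle w,\ H\,\sJ_\Theta^{\top}\,w\big\rangle,
  \qquad H\triangleq T_\theta T_\theta^\top .
  \label{eq: descent-plan-split}
\end{equation}

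Next I would show both terms in~\eqref{eq: descent-plan-split} are nonnegative and the first is strictly positive off the critical set. Since $M$ in~\eqref{eq: M-def-gradient} is block diagonal, $M^\top M$ is block diagonal with $\sU$-block equal to $H$, so Assumption~\ref{ass: good-conditioning} forces $S_\theta^\top$ to be injective (hence $\|S_\theta^\top v\|>0$ whenever $v\neq 0$) and $H$ to be symmetric positive definite with $\kappa(H)\le\kappa(M^\top M)\le 1/\gamma$, because a diagonal block cannot have larger condition number than the whole matrix. For the coupled term, write $\sJ_\Theta^\top=\mathrm{I}-K^\top$ with $K\triangleq\dd{T_\Theta}{u}$, so $\|K^\top\|=\|K\|\le\gamma$ by~\eqref{eq: T-u-gamma-contraction}. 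Working in the $H$-inner product and applying Cauchy--Schwarz,
\begin{equation}
  \big\langle w,\,H\sJ_\Theta^\top w\big\rangle
   = \langle w,Hw\rangle - \big\langle H^{1/2}w,\ H^{1/2}K^\top w\big\rangle
   \ \ge\ \langle w,Hw\rangle - \big\|H^{1/2}\big\|\,\|K^\top w\|\,\big\|H^{1/2}w\big\|
   \ \ge\ \big(1-\gamma\sqrt{\kappa(H)}\,\big)\langle w,Hw\rangle ,
\end{equation}
where the last inequality uses $\|H^{1/2}\|=\lambda_{\max}(H)^{1/2}$ and $\|K^\top w\|\le\gamma\|w\|\le\gamma\,\lambda_{\min}(H)^{-1/2}\|H^{1/2}w\|$. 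As $\gamma\in[0,1)$ and $\kappa(H)\le 1/\gamma$ we get $\gamma\sqrt{\kappa(H)}\le\sqrt{\gamma}<1$, so $\langle w,H\sJ_\Theta^\top w\rangle\ge(1-\sqrt{\gamma})\lambda_{\min}(H)\|w\|^2\ge 0$. Substituting back into~\eqref{eq: descent-plan-split} gives $\big\langle\dd{}{\Theta}\ell,\ p_\Theta\big\rangle\le-\|S_\theta^\top v\|^2\le 0$, strict as soon as $v\neq 0$. Finally, since the $\theta_T$ block of the gradient also vanishes when $v=0$, the loss gradient is nonzero precisely when $v\neq 0$; hence $p_\Theta$ is a genuine descent direction, proving the theorem.

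The chain-rule / implicit-function-theorem bookkeeping producing the block forms is routine (it is essentially already in~\eqref{eq: adjoint_equation}--\eqref{eq: standard-implicit-gradient}), as is keeping track of transposes when matching blocks. The one delicate step — the main obstacle — is the estimate $\langle w,H\sJ_\Theta^\top w\rangle\ge 0$: since $H$ and $\sJ_\Theta$ need not commute and $\sJ_\Theta$ is not symmetric, one cannot invoke ``a product of positive matrices is positive,'' which is false in general. The remedy is to measure in the $H$-geometry, in which the perturbation $K^\top$ has operator norm at most $\gamma\sqrt{\kappa(H)}$, and the conditioning hypothesis $\kappa(M^\top M)\le 1/\gamma$ supplies exactly the margin needed to keep this below $1$. (Equivalently, one can run the whole argument through the preconditioned-gradient identity $p_\Theta=-M\,\mathrm{diag}\!\left(\mathrm{I},\sJ_\Theta\right)M^{+}\dd{}{\Theta}\ell$ and show the preconditioning operator has positive-definite symmetric part on $\mathrm{range}(M)$.)
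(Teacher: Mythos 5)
Your proof is correct, and it takes a genuinely different route from the paper's. The paper collapses everything into the single identity $p_\Theta=-(M\tilde{\sJ}_\Theta M^+)\,\dd{\ell}{\Theta}$ and bounds $-\langle w,\tilde{\sJ}_\Theta(M^\top M)^{-1}w\rangle$ with $w=M^\top\dd{\ell}{\Theta}$; the non-commutativity of $\tilde{\sJ}_\Theta$ and $(M^\top M)^{-1}$ is handled by splitting off $\overline{\lambda}\mathrm{I}$ (the mean of the extreme eigenvalues), using the coercivity of $\tilde{\sJ}_\Theta$ from Lemma~\ref{lemma: J-coercive} on the scalar part and Cauchy--Schwarz plus Lemma~\ref{lemma: symmetric-matrix-norm} on the remainder, which yields $\langle\dd{\ell}{\Theta},p_\Theta\rangle\le-(\lambda_--\gamma\lambda_+)\|w\|^2$ and therefore needs the full strength of $\kappa(M^\top M)\le 1/\gamma$ (and degenerates to a non-strict bound at equality). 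You instead split by blocks: the $\theta_S$ block alone contributes the strictly negative $-\|S_\theta^\top v\|^2$, using only the injectivity of $\pp{S_\Theta}{\theta_S}$ guaranteed by the full-column-rank part of Assumption~\ref{ass: good-conditioning}, so the $\theta_T$ contribution only needs to be nonnegative; your $H$-weighted Cauchy--Schwarz estimate delivers this whenever $\gamma\sqrt{\kappa(H)}\le 1$, a weaker requirement than the paper's. This buys you three things: you avoid Lemma~\ref{lemma: symmetric-matrix-norm} entirely, you retain strict descent even in the boundary case $\kappa(M^\top M)=1/\gamma$, and you expose where the descent actually comes from (the $S_\Theta$ block, with the conditioning hypothesis serving only to keep the $T_\Theta$ block from fighting it). The one ingredient you share with the paper and should cite explicitly is the bound $\left\|\dd{T_\Theta}{u}\right\|\le\gamma$, which is Step~1 of the proof of Lemma~\ref{lemma: J-coercive} rather than literally the statement of \eqref{eq: T-u-gamma-contraction}.
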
 

Theorem \ref{thm: backprop-descent} shows we can avoid difficult computations associated with $\sJ_\Theta^{-1}$ in (\ref{eq: standard-implicit-gradient}) (\ie solving an associated linear system/adjoint equation) in implicit network literature \cite{chen2018neural,dupont2019ANODEs,bai2019deep,winston2020monotone}. Thus, our scheme more naturally applies to general multilayered $T_\Theta$ and is substantially simpler to code. 
Our    scheme is juxtaposed in Figure \ref{fig: backprop} with classic and Jacobian-based schemes.

Two additional considerations must be made when determining the efficacy of training a model using (\ref{eq: descent-direction}) rather than Jacobian-based gradients  (\ref{eq: standard-implicit-gradient}).

\begin{minipage}{0.45\textwidth}
\vspace{2pt}
\begin{itemize}
    \item[\itemsymbol]  Does use of $p_\Theta$ in (\ref{eq: descent-direction}) degrade training/testing performance relative to (\ref{eq: standard-implicit-gradient})?
    
    \item[\itemsymbol] Is the term $p_\Theta$ in (\ref{eq: descent-direction}) resilient to errors in estimates of the fixed point $u_d^\star$?
\end{itemize}
\vspace{2pt}
\end{minipage}

The first answer is   our training scheme   takes a different path to minimizers than using gradients with the implicit model. Thus, for nonconvex problems, one should not expect the results to be the same.  
In our experiments in Section \ref{sec: experiments}, using (\ref{eq: descent-direction})  is competitive (\ref{eq: standard-implicit-gradient})  for all tests (when applied to nearly identical models).  
The second inquiry is partly answered by the corollary below, which states JFB yields descent even for approximate fixed points. 

\begin{corollary} \label{cor: perturbation-resilience}
     Given weights $\Theta$ and data $d$, there exists $\varepsilon > 0$ such that
if $u_d^\varepsilon\in\sU$ satisfies $\|u_d^\varepsilon- u_d^\star\| \leq \varepsilon$ and the assumptions of Theorem \ref{thm: backprop-descent} hold, then  
    \begin{equation}
        p_\Theta^\varepsilon \triangleq 
        - \dd{}{\Theta}\Big[ \ell(y_d, S_\Theta(T_\Theta(u,d))\Big]_{u=u_d^\varepsilon}
    \end{equation} 
    is a descent direction of $\ell(y_d, \sN_\Theta(d))$ with respect to $\Theta$.  
\end{corollary}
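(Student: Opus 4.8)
The plan is to leverage Theorem 3.2, which already establishes that $p_\Theta$ is a descent direction at the exact fixed point $u_d^\star$, together with a continuity/perturbation argument. The key observation is that being a descent direction for $\ell(y_d, \sM_\Theta(d))$ means $\langle p_\Theta, \nabla_\Theta \ell(y_d,\sM_\Theta(d))\rangle > 0$ (equivalently, the angle condition), and both $p_\Theta$ as a function of the point $(u,d)$ and the true loss gradient are built from the mappings $\partial S_\Theta/\partial u$, $\partial S_\Theta/\partial\Theta$, $\partial T_\Theta/\partial\Theta$, $\partial T_\Theta/\partial u$ and $\partial\ell/\partial y$, all of which are continuous in $(u,d)$ by Assumption 3.1 (smoothness). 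Since $u_d^\star$ is fixed and the true loss gradient $\nabla_\Theta\ell(y_d,\sM_\Theta(d))$ from \eqref{eq: standard-implicit-gradient} is a \emph{fixed} vector (it does not depend on the auxiliary point $u_d^\varepsilon$), the quantity $\langle p_\Theta^\varepsilon, \nabla_\Theta\ell(y_d,\sM_\Theta(d))\rangle$ is a continuous function of $u_d^\varepsilon$ near $u_d^\star$.

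First I would write out, using Theorem 3.2 and its proof, the strict inequality $\langle p_\Theta, \nabla_\Theta\ell\rangle \geq c\|\nabla_\Theta\ell\|^2 > 0$ for some constant $c>0$ depending on $\gamma$ and $\kappa(M^\top M)$ (this is exactly what the preconditioning-term analysis with the coercivity bound from Lemma on $\sJ_\Theta$ yields); here $\nabla_\Theta\ell$ abbreviates the exact loss gradient. If the exact loss gradient is zero, $\Theta$ is already stationary and there is nothing to prove, so assume $\|\nabla_\Theta\ell\| > 0$. Next I would note that $p_\Theta^\varepsilon$ depends continuously on $u_d^\varepsilon$ (composition of continuous maps), so $p_\Theta^\varepsilon \to p_\Theta$ as $u_d^\varepsilon\to u_d^\star$. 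Then by continuity of the inner product, $\langle p_\Theta^\varepsilon, \nabla_\Theta\ell\rangle \to \langle p_\Theta, \nabla_\Theta\ell\rangle \geq c\|\nabla_\Theta\ell\|^2 > 0$. Hence there exists $\varepsilon > 0$ such that $\|u_d^\varepsilon - u_d^\star\|\leq\varepsilon$ implies $\langle p_\Theta^\varepsilon, \nabla_\Theta\ell\rangle > \tfrac{1}{2} c\|\nabla_\Theta\ell\|^2 > 0$, which is the desired descent property for $\ell(y_d,\sM_\Theta(u_d^\star,d))$ with respect to $\Theta$.

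The one subtlety to be careful about is \emph{which} loss gradient the descent is measured against. The corollary statement fixes the objective as $\ell(y_d,\sM_\Theta(u_d^\star,d))$ — i.e., the loss of the network evaluated at the \emph{true} fixed point — so the relevant gradient is the exact one from \eqref{eq: standard-implicit-gradient}, a constant with respect to $u_d^\varepsilon$. This is what makes the argument clean: only $p_\Theta^\varepsilon$ moves, and it moves continuously, while the target direction stands still. The main obstacle, then, is not the limiting argument itself but making the continuity claim rigorous on a neighborhood: one needs $u_d^\varepsilon$ to stay in a region where $S_\Theta$ and $T_\Theta$ (and their derivatives) are defined and continuous, which Assumption 3.1 grants on all of $\sU$, and one should record that the constant $c$ from Theorem 3.2 is uniform (it depends only on $\gamma$ and the conditioning bound \eqref{eq: gradient-conditioning}, both fixed for given $\Theta$). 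I would close by remarking that, quantitatively, $\varepsilon$ can be taken proportional to the gap $c\|\nabla_\Theta\ell\|^2$ divided by a local Lipschitz constant of $u\mapsto p_\Theta(u,d)$ near $u_d^\star$, though the qualitative statement suffices for the corollary.
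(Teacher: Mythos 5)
Your argument is essentially the paper's own proof: both rely on the strict quantitative bound $\left<p_\Theta(u_d^\star),\nabla\right>\leq -2(\lambda_--\gamma\lambda_+)\|M^\top\nabla\|^2$ from the proof of Theorem \ref{thm: backprop-descent}, continuity of $u\mapsto p_\Theta(u)$ granted by Assumption \ref{ass: smooth-mappings}, and the observation that the reference gradient is fixed, so the inner product cannot change sign for $u_d^\varepsilon$ close enough to $u_d^\star$ (the paper just makes the neighborhood explicit via Cauchy--Schwarz rather than a limit). Note only that with the paper's sign convention ($p_\Theta$ carries a leading minus), the descent condition is $\left<p_\Theta,\nabla\right><0$, not $>0$ as you wrote.
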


We are not aware of any analogous results for error tolerances in the implicit depth literature.
 
\paragraph{Coding Backpropagation} 
A key feature of JFB is its simplicity of implementation. In particular, the backpropagation of our scheme is   similar to that of a standard backpropagation. We illustrate this in the   sample of  PyTorch~\cite{paszke2017automatic}  code in Figure \ref{fig: pytorch-sample-code}.
Here { \verb|explicit_model|} represents $S_\Theta(T_\Theta(u;d))$. The fixed point { $u_d^\star=$ \verb|u_fxd_pt|} is computed by successively applying $T_\Theta$ (see Algorithm \ref{alg: FPN_Abstract})  within a  { \verb|torch.no_grad()|} block.
With this fixed point, { \verb|explicit_model|}  evaluates and returns $S_\Theta ( T_\Theta(u_d^\star, d))$ to { \verb|y|} in { \verb|train|} mode  (to create the computational graph). Thus, our   scheme coincides with standard backpropagation through an explicit model with \emph{one} latent space layer.
On the other hand, standard implicit models backpropagate by solving a linear system to apply $\sJ_\Theta^{-1}$ as in (\ref{eq: standard-implicit-gradient}). That approach   requires users to manually update the parameters,  use more computational resources, and make considerations (\eg conditioning of $\sJ_\Theta^{-1}$) for each   architecture used.
\begin{figure}[H]
\begin{tcolorbox}[colback=white!96!black,colframe=black!25!white,title=Implicit Forward + Proposed Backprop, coltitle=black, left=1mm]
\vspace*{-5pt}
\begin{verbatim}  
u_fxd_pt = find_fixed_point(d)
y = explicit_model(u_fxd_pt, d)
loss = criterion(y, labels)
loss.backward()
optimizer.step()
\end{verbatim} 
\end{tcolorbox}
    \caption{Sample PyTorch code for backpropagation} 
    \label{fig: pytorch-sample-code}
\end{figure}


\paragraph{Neumann Backpropagation} 
The inverse of the Jacobian in~\eqref{eq: Jacobian_def} can be expanded using a Neumann series, \ie
\begin{equation}
    \sJ_\Theta^{-1} = \left( \mbox{I} - \dd{T_\Theta}{u}\right)^{-1} = \sum_{k=0}^\infty  \left( \dd{T_\Theta}{u}\right)^k.
    \label{eq: neumann-series}
\end{equation} 
Thus, JFB is a zeroth-order approximation to the Neumann series. 
In particular, JFB resembles the Neumann-RBP approach for recurrent networks~\cite{liao2018reviving}. 
However, Neumann-RBP does not guarantee a descent direction or guidelines on how to truncate the Neumann series. This is generally difficult to achieve in theory and practice~\cite{aicher2020adaptively}.
Our work differs from~\cite{liao2018reviving} in that we focus purely on implicit networks, prove descent guarantees for JFB, and provide simple PyTorch implementations.
Similar approaches exist in hyperparameter optimization, where truncated Neumann series are is used to approximate second-order updates during training~\cite{luketina2016scalable,lorraine2020optimizing}. Finally, similar zeroth-order truncations of the Neumann series have been employed, albeit without proof, in Meta-learning \cite{finn2017model,finn2019meta} and in training transformers \cite{geng2021is}.

\begin{figure*}[t]
    \centering
    \includegraphics[]{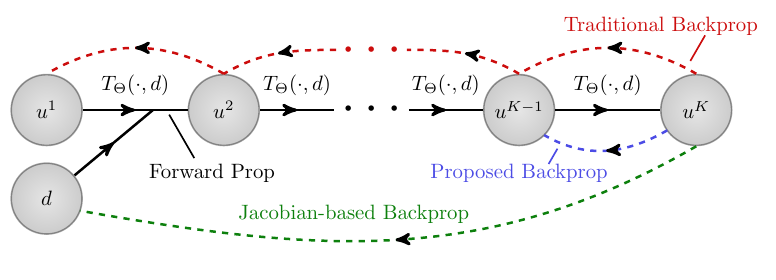} 
    \caption{Diagram of backpropagation schemes for recurrent implicit depth models. Forward propagation is tracked via solid arrows point to the right (\nb each forward step uses $d$). Backpropagation is shown via dashed arrows pointing to the left. Traditional backpropagation requires memory capacity proportional to depth (which is implausible for large $K$). Jacobian-based backpropagation solves an associated equation dependent upon the data $d$ and operator $T_\Theta$. JFB uses a single backward step, which avoids both large memory capacity requirements and solving a Jacobian-type equation.}
    \label{fig: backprop}
\end{figure*}

\section{Experiments} \label{sec: experiments} 
 
This section shows the effectiveness of JFB  using PyTorch \cite{paszke2017automatic}.
All networks are ResNet-based such that Assumption \ref{ass: independent-theta} holds.\footnote{A weaker version of Assumption \ref{ass: smooth-mappings} also holds in practice, \ie differentiability almost everywhere.} 
One can ensure Assumption \ref{ass: T-contraction} holds (\eg via spectral normalization). Yet, in our experiments we found this unnecessary since tuning the weights automatically encouraged contractive behavior.\footnote{We found (\ref{eq: T-u-gamma-contraction}) held for batches of  data during training, even when using batch normalization. See Appendix~\ref{app: experimental-settings} for more details.}
All experiments are run on a single NVIDIA TITAN X GPU with 12GB RAM. Further details  are in Appendix~\ref{app: experimental-settings}.
 
\begin{table}[H]
    \small
    \centering
    \def\ROWCOLOR{black!10!white}
    \begin{tabular}{l r r}
	    \multicolumn{3}{c}{\textbf{MNIST}}  \\ \midrule
		{Method}      & {Network size} & {Acc.} \\\midrule 
		
		\rowcolor{\ROWCOLOR}
		{Explicit} & {54K} & {99.4\%} \\  
		Neural ODE$^{\dagger}$ &  84K & 96.4\% \\
		\rowcolor{\ROWCOLOR}
		Aug. Neural ODE$^{\dagger}$ &  84K & 98.2\%  \\	 
		MON $^\ddag$ & 84K & {99.2\%}\\  
		\rowcolor{\ROWCOLOR}
		\textbf{JFB-trained Implicit ResNet (ours)} & {54K} & \textbf{99.4}\% 
		\\
		\vspace{-5pt}
		\\		    
	    \multicolumn{3}{c}{\textbf{SVHN}}\\ \midrule
		{Method}      & {Network size} & {Acc.}\\  \midrule
		\rowcolor{\ROWCOLOR}
		Explicit      & 164K & 93.7\% \\ 
		Neural ODE$^{\dagger}$& 172K & 81.0\% \\
		\rowcolor{\ROWCOLOR}
		Aug. Neural ODE$^{\dagger}$& 172K & 83.5\%      \\ 
		MON (Multi-tier lg)$^\ddag$ & 170K & {92.3\%}  \\
		\rowcolor{\ROWCOLOR}
		\textbf{JFB-trained Implicit ResNet (ours)} & 164K & \textbf{94.1\%}  
		\\
		\\
        \multicolumn{3}{c}{\textbf{CIFAR-10}}		\\ \midrule
		{Method}   & {Network size} & {Acc.}\\ \midrule

		
	    
	
		\rowcolor{\ROWCOLOR}
        Explicit (ResNet-56)$^*$                  & 0.85M & 93.0\% 
		\\
		MON (Multi-tier lg){$^\ddag$}{$^*$}  & 1.01M & {89.7\%} 
		\\
		\rowcolor{\ROWCOLOR}
        {\bf JFB-trained Implicit ResNet (ours)$^*$} & 0.84M  &  \textbf{93.7\%} 
        \\		
        \ \\[-7pt]
        \hdashline[3.5pt/8pt] \\[-8pt]
        
		Multiscale DEQ{$^*$} & 10M &  93.8\%
	\end{tabular}
    \vspace{5pt}
    \caption{Test accuracy of   JFB-trained Implicit ResNet   compared to Neural ODEs, Augmented NODEs, and MONs;  $^\dagger$as reported in \cite{dupont2019ANODEs}; $^\ddag$as reported in \cite{winston2020monotone}; *with data augmentation}
	\label{tab: results}
\end{table}

\begin{table*}[t]
    \def\CCOL{black!10!white} 
    \centering
    \begin{tabular}{c c  c c c}
        & Dataset &  Avg time per epoch (s) & \# of $\mathcal{J}$ mat-vec products & Accuracy \%
        \\
        \toprule
        \multirow{3}{*}{\begin{minipage}{0.8in} \centering Jacobian based \end{minipage}}
        & \cellcolor{\CCOL} MNIST        
        &  \cellcolor{\CCOL} 28.4
        & \cellcolor{\CCOL} $6.0 \times 10^6$
        & \cellcolor{\CCOL} 99.2
        \\
        & SVHN       & $\:$92.8 & $\:$ $1.4 \times 10^{7}$ & $\:$90.1
        \\
        & \cellcolor{\CCOL} CIFAR10   
        & \cellcolor{\CCOL} 530.9
        & \cellcolor{\CCOL} $9.7 \times 10^8$
        & \cellcolor{\CCOL} 87.9\\
        \toprule
        \multirow{3}{*}{\begin{minipage}{0.8in} \centering JFB \end{minipage}}
        & \cellcolor{\CCOL} MNIST        
        & \cellcolor{\CCOL} 17.6
        & \cellcolor{\CCOL} 0 
        & \cellcolor{\CCOL} 99.4 
        \\
        & SVHN       & $\:$36.9 & $\:$0  & $\:$94.1
        \\
        & \cellcolor{\CCOL} CIFAR10   
        & \cellcolor{\CCOL} {146.6}
        & \cellcolor{\CCOL} 0
        & \cellcolor{\CCOL} 93.67
        \\ 
    \end{tabular}
    \caption{ Comparison of Jacobian-based backpropagation (first three rows) and our proposed JFB approach. ``Mat-vecs'' denotes matrix-vector products.}
    \label{tab: adjointComparison}
\end{table*}

\subsection{Classification} 
\label{subsec: classification}
We train implicit networks on three benchmark image classification datasets licensed under CC-BY-SA: SVHN~\cite{netzer2011reading}, MNIST~\cite{lecun2010mnist}, and CIFAR-10~\cite{krizhevsky2009learning}. 
 Table~\ref{tab: results} compares our results with state-of-the-art results for implicit networks, including Neural ODEs~\cite{chen2018neural}, Augmented Neural ODEs~\cite{dupont2019ANODEs}, Multiscale DEQs~\cite{bai2020multiscale}, and MONs~\cite{winston2020monotone}. 
We also compare with corresponding explicit versions of our ResNet-based networks given in~\eqref{eq: explicit_network_definition} as well as with state-of-the-art ResNet results~\cite{he2016deep} on the augmented CIFAR10 dataset. 
The explicit networks are trained with the   same setup as their implicit counterparts.
Table~\ref{tab: results} shows JFBs are an effective way to train implicit networks, substantially outperform all the ODE-based networks as well as MONs using similar or fewer parameters.
Moreover, JFB is competitive with Multiscale DEQs~\cite{bai2020multiscale} despite having less than a tenth as many parameters.
Appendix~\ref{app: classification-plots} contains additional results.

\subsection{Comparison to Jacobian-based Backpropagation} \label{subsec: adjoint-comparison}
Table~\ref{tab: adjointComparison} compares   performance between using the standard Jacobian-based backpropagation and JFB.
The experiments are performed on all the datasets described in Section~\ref{subsec: classification}.
To apply the Jacobian-based backpropagation in \eqref{eq: adjoint_equation}, we use the conjugate gradient (CG) method on an associated set of normal equations similarly to~\cite{liao2018reviving}. 
To maintain similar costs, we set the maximum number of CG iterations to be the same as the maximum depth of the forward propagation.
The remaining experimental settings are kept the same as those from our proposed approach (and are therefore not tuned to the best of our ability).
Note the network architectures trained with JFB contain batch normalization in the latent space whereas those trained with Jacobian-based backpropagation do not. 
Removal of batch normalization for the Jacobian-based method was necessary due to a lack of convergence when solving~\eqref{eq: adjoint_equation}, thereby increasing training loss (see Appendix~\ref{app: experimental-settings} for further details). 
This phenomena is also observed in previous works~\cite{bai2020multiscale,bai2019deep}.
Thus, we find JFB to be (empirically) effective on a wider class of network architectures (\eg including batch normalization).
The main purpose of the Jacobian-based results in Figure~\ref{fig: comparison-CIFAR10} and Table~\ref{tab: adjointComparison} is to show speedups in training time while maintaining a competitive accuracy with previous state-of-the-art implicit networks.
More plots are given in Appendix \ref{app: classification-plots}.
 
 \begin{figure}[t]
    \centering
    \includegraphics[width=0.47\textwidth]{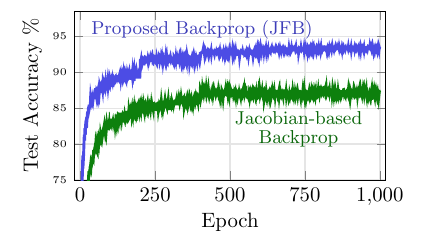}  
    \vspace*{-0.25in}
        
    \includegraphics[width=0.47\textwidth]{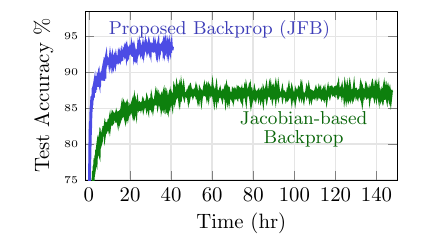}
    \vspace*{-0.3in}    
    \caption{CIFAR10 results using comparable networks/configurations, but with two backpropagation schemes: our proposed JFB method   (blue) and   standard Jacobian-based backpropagation in (\ref{eq: standard-implicit-gradient}) (green), with fixed point tolerance $\epsilon = 10^{-4}$.  JFB is faster and gives better test accuracy.}
    \label{fig: comparison-CIFAR10}
\end{figure}

\subsection{Higher Order Neumann Approximation} \label{subsec: Neumann_comparison}
As explained in Section~\ref{sec: backpropagation}, JFB can be interpreted as an approximation to the Jacobian-based approach by using a truncated series expansion. In particular, JFB is the zeroth order (\ie $k=0$) truncation to  the Neumann series expansion (\ref{eq: neumann-series}) of the Jacobian inverse $\sJ_\Theta^{-1}$. In Figure~\ref{fig: comparison-MNIST-Neumann}, we compare JFB with training that uses more Neumann series terms in the approximation of the the Jacobian inverse $\sJ_\Theta^{-1}$. Figure~\ref{fig: comparison-MNIST-Neumann} shows JFB is competitive at reduced time cost. More significantly, JFB is also much easier to implement as shown in Figure~\ref{fig: pytorch-sample-code}. An additional experiment with SVHN data and discussion about code are provided in Appendix~\ref{app: Neumann_experiments}. \\[20pt]

\begin{figure}[H]
    \centering
        \includegraphics[width = 0.47\textwidth]{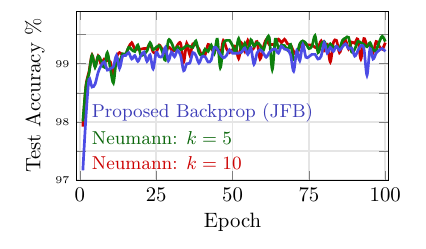} 
        \vspace*{-0.25in}
        
        \includegraphics[width = 0.47\textwidth]{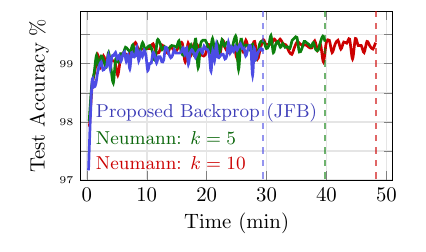}  
        \vspace*{-0.3in}
    \caption{MNIST training using different truncations $k$ of the Neumann series (\ref{eq: neumann-series}) to approximate the inverse Jacobian $\sJ_\Theta^{-1}$. Plots show faster training with fewer terms (fastest with JFB, \ie $k=0$) and competitive test accuracy.}
    \label{fig: comparison-MNIST-Neumann}
\end{figure}

\section{Conclusion} \label{sec: conclusion}
This work presents a new and simple Jacobian-free backpropagation (JFB) scheme.  
JFB enables training of implicit networks with fixed memory costs (regardless of depth), is easy to code (see Figure~\ref{fig: pytorch-sample-code}), and yields efficient backpropagation (by removing computations to do linear solves at each step).
Use of JFB is theoretically justified (even when fixed points are  approximately computed). 
Our experiments show JFB yields competitive results for implicit networks.
Extensions  will enable satisfaction of additional constraints for  imaging and phase retrieval~\cite{klibanov1986determination,fienup1982phase,heaton2020projecting,fung2020multigrid,kan2020pnkh}, geophysics~\cite{haber2014computational,fung2019multiscale,fung2019uncertainty}, and games~\cite{von1959theory,lin2020apac,li2019cubic,ruthotto2020machine}.
Future work will analyze our proposed JFB in stochastic settings.

\section{Acknowledgements}
HH, DM, SO, SWF and QL were supported by 
AFOSR MURI FA9550-18-1-0502 and ONR grants: N00014-18-
1-2527, N00014-20-1-2093, and N00014-20-1-2787.
HH’s work was also supported by the National
Science Foundation (NSF) Graduate Research Fellowship under Grant No. DGE-1650604. Any opinion, findings,
and conclusions or recommendations expressed in this material are those of the authors and do not necessarily reflect
the views of the NSF. 
We thank Zaccharie Ramzi for the fruitful discussions and the anonymous referees for helping us improve the quality of our paper.

{\small
\bibliography{JFB}
}

\clearpage

\onecolumn

\section*{\LARGE{Appendix}}
\appendix


\section{Proofs} \label{app: proofs}

This section provides proofs for     results of Section~\ref{sec: backpropagation}. For the reader's convenience, we restate all results before proving them.

\begin{lemma} \label{lemma: J-coercive}
If Assumption \ref{ass: T-contraction} and \ref{ass: smooth-mappings} hold, then  $\sJ_\Theta$ in (\ref{eq: Jacobian_def}) exists and
\begin{equation}
      \left< u, \sJ_\Theta u\right> \geq (1-\gamma) \|u\|^2,
    \ \ \ \mbox{for all $u\in\sU$.}
    \label{eq: coercivity-J}  
\end{equation}
Additionally, $\sJ_\Theta$ is invertible, and its inverse $\sJ_\Theta^{-1}$ satisfies the coercivity inequality
\begin{equation}
    \left< u, \sJ_\Theta^{-1} u\right> \geq \dfrac{1-\gamma}{(1+\gamma)^2} \|u\|^2,
    \ \ \ \mbox{for all $u\in\sU$.}
    \label{eq: coercivity-J-inverse}
\end{equation}
\end{lemma}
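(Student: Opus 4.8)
The plan is to exploit the fact that, under Assumption~\ref{ass: T-contraction}, the Jacobian $\mathrm{d}T_\Theta/\mathrm{d}u$ at $(u_d^\star,d)$ is a linear operator whose norm is bounded by the contraction factor $\gamma < 1$. First I would note that since $T_\Theta(\cdot\,;d)$ is $\gamma$-Lipschitz and (by Assumption~\ref{ass: smooth-mappings}) continuously differentiable, the mean value inequality forces $\|\mathrm{d}T_\Theta/\mathrm{d}u\| \leq \gamma$ as an operator norm; in particular $\sJ_\Theta = \mathrm{I} - \mathrm{d}T_\Theta/\mathrm{d}u$ is well-defined. Write $A \triangleq \mathrm{d}T_\Theta/\mathrm{d}u$ so that $\|A\|\leq\gamma$.

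For the first coercivity bound~\eqref{eq: coercivity-J}, I would compute directly: for any $u\in\sU$,
\begin{equation}
    \left<u, \sJ_\Theta u\right> = \left<u, u\right> - \left<u, A u\right> \geq \|u\|^2 - \|u\|\,\|Au\| \geq \|u\|^2 - \gamma\|u\|^2 = (1-\gamma)\|u\|^2,
\end{equation}
using Cauchy--Schwarz and $\|Au\|\leq\gamma\|u\|$. This immediately gives invertibility: $\sJ_\Theta$ is a bounded linear operator on a finite-dimensional space satisfying $\left<u,\sJ_\Theta u\right>\geq(1-\gamma)\|u\|^2 > 0$ for $u\neq 0$, so it is injective, hence bijective, hence invertible. (Alternatively one can invoke the Neumann series $\sJ_\Theta^{-1} = \sum_{k\geq 0} A^k$, which converges since $\|A\|\leq\gamma<1$, and which also yields $\|\sJ_\Theta^{-1}\|\leq 1/(1-\gamma)$.)

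For the second bound~\eqref{eq: coercivity-J-inverse}, the idea is to substitute $v = \sJ_\Theta^{-1}u$, so that $u = \sJ_\Theta v$ and the target inequality becomes $\left<\sJ_\Theta v, v\right> \geq \frac{1-\gamma}{(1+\gamma)^2}\|\sJ_\Theta v\|^2$. The left side is $\geq (1-\gamma)\|v\|^2$ by~\eqref{eq: coercivity-J}, while the right side is controlled using $\|\sJ_\Theta v\| = \|v - Av\| \leq \|v\| + \gamma\|v\| = (1+\gamma)\|v\|$, whence $\frac{1-\gamma}{(1+\gamma)^2}\|\sJ_\Theta v\|^2 \leq \frac{1-\gamma}{(1+\gamma)^2}(1+\gamma)^2\|v\|^2 = (1-\gamma)\|v\|^2$. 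Chaining the two gives $\left<\sJ_\Theta v, v\right>\geq(1-\gamma)\|v\|^2\geq\frac{1-\gamma}{(1+\gamma)^2}\|\sJ_\Theta v\|^2$, which is exactly the claim after translating back.

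The main obstacle — and it is a mild one — is justifying rigorously that $\|\mathrm{d}T_\Theta/\mathrm{d}u\|\leq\gamma$ from the $\gamma$-contraction hypothesis; this is where smoothness (Assumption~\ref{ass: smooth-mappings}) is really needed, since one takes a directional derivative $\lim_{t\to 0}\|T_\Theta(u+tv;d)-T_\Theta(u;d)\|/(t\|v\|) \leq \gamma$ and identifies the limit with $\|(\mathrm{d}T_\Theta/\mathrm{d}u)v\|/\|v\|$. Everything else is elementary Hilbert-space manipulation with Cauchy--Schwarz and the triangle inequality.
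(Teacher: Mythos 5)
Your proposal is correct and follows essentially the same route as the paper: bound $\|\mathrm{d}T_\Theta/\mathrm{d}u\|\leq\gamma$ via the directional-derivative limit, get \eqref{eq: coercivity-J} from Cauchy--Schwarz, deduce invertibility from the trivial kernel, and obtain \eqref{eq: coercivity-J-inverse} by applying \eqref{eq: coercivity-J} to $v=\sJ_\Theta^{-1}u$ together with $\|\sJ_\Theta\|\leq 1+\gamma$. The only cosmetic differences are your Neumann-series aside for invertibility and phrasing the last step as a substitution rather than the paper's chain $\|u\|^2\leq(1+\gamma)^2\|\sJ_\Theta^{-1}u\|^2$, which is the same inequality.
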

\begin{proof}
We proceed in the following manner.
First we establish  the coercivity inequality (\ref{eq: coercivity-J}) (Step 1). This is used to show $\sJ_\Theta$ is invertible (Step 2).  The previous two results are  then combined to establish the inequality (\ref{eq: coercivity-J-inverse}) (Step 3). All unproven results that are quoted below about operators are standard and may be found   standard functional analysis texts (\eg \cite{kreyszig1978introductory}).\\

{\bf Step 1.} 
To obtain our coercivity inequality, we identify a bound on the operator norm for $\partial T_\Theta/\partial u$.
Fix any unit vector $v\in\sU$. Then, by the definition of differentiation,
\begin{equation}
      \dd{T_\Theta}{u} v   = \lim_{\varepsilon\rightarrow 0^+} \dfrac{T_\Theta(u^\star+\varepsilon v;d)-T_\Theta(u^\star; d)}{ \|(u^\star+\varepsilon v)-u^\star\|}
      = \lim_{\varepsilon\rightarrow 0^+} \dfrac{T_\Theta(u^\star+\varepsilon v;d)-T_\Theta(u^\star; d)}{  \varepsilon}.
    \label{eq: T-Theta-gradient-definition}
\end{equation}
Thus,
\begin{equation} 
    \left\| \dd{T_\Theta}{u} v\right\|
    = \left\| \lim_{\varepsilon\rightarrow 0^+} \dfrac{T_\Theta(u^\star+\varepsilon v;d)-T_\Theta(u^\star; d)}{  \varepsilon} \right\| 
    = \lim_{\varepsilon\rightarrow 0^+}   \dfrac{\|T_\Theta(u^\star + \varepsilon v;d)-T_\Theta(u^\star; d)\|}{  \varepsilon},
\end{equation}
where the first equality follows from (\ref{eq: T-Theta-gradient-definition}) and the second holds  by the continuity of norms. Combining (\ref{eq: T-norm-bound}) with  the Lipschitz assumption (\ref{eq: T-u-gamma-contraction}) gives the upper bound
\begin{equation}
    \left\| \dd{T_\Theta}{u} v\right\|
    \leq \lim_{\varepsilon\rightarrow 0^+}  \dfrac{\gamma \|(u^\star +\varepsilon v) - u^\star \|}{\varepsilon} 
    = \gamma.
    \label{eq: T-norm-bound}
\end{equation}
Because the upper bound relation in (\ref{eq: T-norm-bound}) holds for an arbitrary unit vector $v\in\sU$, we deduce
\begin{equation}
    \left\| \dd{T_\Theta}{u}\right\| \triangleq \sup \left\lbrace \left\| \dd{T_\Theta}{u} v \right\| :\ \|v\| =1 \right\rbrace \leq \gamma.
    \label{eq: T-op-norm-gamma}
\end{equation}
That is, the operator norm  is bounded by $\gamma$.
Together the Cauchy-Schwarz inequality and (\ref{eq: T-op-norm-gamma}) imply
\begin{equation}
    \left< u, \dd{T_\Theta}{u} u \right>
    \leq \left\| \dd{T_\Theta}{u} \right\| \|u\|^2
    \leq \gamma \| u\|^2,
    \ \ \ \mbox{for all}\ u\in\sU.
\end{equation}
Thus, the bilinear form $\left<\ \cdot\ ,\ \sJ_\Theta\  \cdot\ \right>$ is $(1-\gamma)$ coercive, \ie
\begin{equation}
    \left< u, \sJ_\Theta u\right>
    = \|u\|^2 - \left< u, \dd{T_\Theta}{u} u \right>
    \geq (1-\gamma) \|u\|^2,
    \ \ \ \mbox{for all}\ u\in \sU.
    \label{eq: u-Tu-inequality}
\end{equation}

{\bf Step 2.} 
Consider any kernel element $w \in \ker(\sJ_\Theta)$. Then (\ref{eq: u-Tu-inequality}) implies
\begin{equation}
    (1-\gamma)\|w\|^2
    \leq \left< w , \sJ_\Theta w\right>
    = \left< w, 0\right>
    = 0
    \ \ \ \Longrightarrow \ \ \ 
    (1-\gamma)\|w\|^2 \leq 0
    \ \ \ \Longrightarrow \ \ \ 
    w = 0.
\end{equation}
Consequently, the kernel of $\sJ_\Theta$ is trivial, \ie
\begin{equation}
    \ker(\sJ_\Theta) \triangleq \{ u : \sJ_\Theta u = 0 \} = \{0\},
\end{equation}
and wherefore the linear operator $\sJ_\Theta$ is invertible. \\

{\bf Step 3.} 
By (\ref{eq: T-norm-bound}) and an elementary result in functional analysis,  
\begin{equation}
    \|\sJ_\Theta^\top \sJ_\Theta\| = \|\sJ_\Theta\|^2 
    \leq \left( \|\mathrm{I}\| + \left\|\dd{T_\Theta}{u}\right\|\right)^2
    \leq (1+\gamma)^2.
    \label{eq: J-theta-upper-bound}
\end{equation}
Hence
\begin{equation}
    \|u\|^2
    = \left< u, u\right>
    = \left< \sJ_\Theta^{-1} u, (\sJ_\Theta^\top \sJ_\Theta) \sJ_\Theta^{-1} u\right>
    \leq (1+\gamma)^2 \left\| \sJ_\Theta^{-1} u\right\|^2,
    \ \ \ \mbox{for all $u\in\sU$.}
    \label{eq: J-inverse-scalar-product}
\end{equation}
Combining (\ref{eq: u-Tu-inequality})  and (\ref{eq: J-inverse-scalar-product}) reveals
\begin{equation}
    \dfrac{1-\gamma}{(1+\gamma)^2} \left< u ,u\right>
    \leq (1-\gamma) \|\sJ_\Theta^{-1} u\|^2
    \leq \left< \sJ^{-1}_\Theta u, \sJ_\Theta (\sJ_\Theta^{-1} u)\right>
    = \left< \sJ_\Theta^{-1}u, u\right>,
    \ \ \ \mbox{for all $u\in\sU$.}
\end{equation}
This establishes (\ref{eq: coercivity-J-inverse}), and we are done. 
\end{proof}
\newpage

\begin{lemma} \label{lemma: symmetric-matrix-norm}
If $A\in \bbR^{t\times t}$ is symmetric with positive eigenvalues,   
\begin{equation}
    \overline{\lambda} \triangleq \dfrac{\lambda_{\mathrm{max}}(A) + \lambda_{\mathrm{min}}(A)}{2}
    \ \ \ \mbox{and} \ \ \ 
    S \triangleq \overline{\lambda} \mathrm{I} - A, 
    \label{eq: symmetric-matrix-norm}
\end{equation}
then
\begin{equation}
    \|S\| = \dfrac{\lambda_{\mathrm{max}}(A) - \lambda_{\mathrm{min}}(A)}{2}.
\end{equation}
\end{lemma}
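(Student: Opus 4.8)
The plan is to reduce the claim to a statement about the eigenvalues of $S$, exploiting the fact that $S$ is symmetric. First I would note that since $A$ is symmetric and $\overline{\lambda}\mathrm{I}$ is symmetric, $S = \overline{\lambda}\mathrm{I} - A$ is symmetric; hence its operator norm equals its spectral radius, $\|S\| = \max_i |\mu_i|$ where $\{\mu_i\}$ are the eigenvalues of $S$. By the spectral mapping applied to the affine function $t \mapsto \overline{\lambda} - t$ (equivalently, by diagonalizing $A$ in an orthonormal eigenbasis, which simultaneously diagonalizes $S$), the eigenvalues of $S$ are exactly $\mu_i = \overline{\lambda} - \lambda_i(A)$ as $\lambda_i(A)$ ranges over the eigenvalues of $A$.

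Next I would identify $\max_i |\overline{\lambda} - \lambda_i(A)|$. Since every eigenvalue of $A$ lies in $[\lambda_{\min}(A), \lambda_{\max}(A)]$ and $\overline{\lambda}$ is the midpoint of this interval, the function $\lambda \mapsto |\overline{\lambda} - \lambda|$ is maximized over the interval at one of the two endpoints, giving
\begin{equation}
    \max_i |\overline{\lambda} - \lambda_i(A)|
    = |\overline{\lambda} - \lambda_{\max}(A)|
    = |\overline{\lambda} - \lambda_{\min}(A)|
    = \dfrac{\lambda_{\max}(A) - \lambda_{\min}(A)}{2}.
\end{equation}
Both endpoints are genuinely attained because $\lambda_{\max}(A)$ and $\lambda_{\min}(A)$ are themselves eigenvalues of $A$, so the maximum is achieved and not merely an upper bound. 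Combining this with $\|S\| = \max_i|\mu_i|$ yields the claimed identity.

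There is essentially no obstacle here: the only thing to be slightly careful about is justifying $\|S\| = \rho(S)$ (spectral radius) for symmetric $S$ — I would cite the standard fact that for a real symmetric matrix the operator $2$-norm coincides with the largest absolute eigenvalue, which is immediate from the orthogonal diagonalization $S = U \operatorname{diag}(\mu_i) U^\top$. The positivity of the eigenvalues of $A$ is not actually needed for the norm computation (only for the surrounding context of the paper where $A = \sJ_\Theta^\top \sJ_\Theta$), so I would not lean on it. I expect the whole argument to be three or four lines once the eigenbasis is introduced.
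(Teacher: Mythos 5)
Your proposal is correct and follows essentially the same route as the paper: both diagonalize $A$ (and hence $S$) in an orthonormal eigenbasis, reduce $\|S\|$ to $\max_i|\overline{\lambda}-\lambda_i(A)|$, and evaluate this as half the spread of the spectrum since $\overline{\lambda}$ is the midpoint. Your side remark that positivity of the eigenvalues of $A$ is not needed for the identity is also accurate.
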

\begin{proof}
Since $A$ is symmetric, the spectral theorem asserts it possesses a set of eigenvectors  that form an orthogonal basis for $\bbR^{t}$. This same basis forms the set of eigenvectors for $\overline{\lambda} \mathrm{I} - A$, with eigenvalues of $A$ denoted by $\{\lambda_i\}_{i=1}^{t}$. So, there exists orthogonal $P\in\bbR^{t\times t}$ and diagonal $\Lambda$ with entries given by each of the eigenvalues $\lambda_i$ such that
\begin{equation}
    S =  \overline{\lambda} \mathrm{I} - P^\top \Lambda P
    = P^\top \left( \overline{\lambda}\mathrm{I} - \Lambda\right)P.
\end{equation}
Substituting this equivalence into the definition of the operator norm yields
\begin{equation}
     \| S\| \triangleq \sup \left\lbrace \left\|  S\xi \right\| :\ \|\xi\| =1 \right\rbrace 
    =  \sup \left\lbrace  \|P^\top (\overline{\lambda} I-\Lambda)P\xi\| :\ \|\xi\| =1 \right\rbrace.
    \label{eq: S-norm-1}
\end{equation} 
Leveraging the fact $P$ is orthogonal enables the supremum above to be restated via
\begin{equation}
     \| S\|  
     = \sup \left\lbrace  \|  (\overline{\lambda} I-\Lambda)P\xi\| :\ \|\xi\| =1 \right\rbrace
     = \sup \left\lbrace   \|(\overline{\lambda} I-\Lambda)\zeta\| :\ \|\zeta\| =1 \right\rbrace  .
     \label{eq: S-norm-2}
\end{equation}
Because $\overline{\lambda}\mathrm{I}-\Lambda$ is diagonal, (\ref{eq: S-norm-2}) implies
\begin{equation}
    \| S\| =  \max_{i\in[t]}|\overline{\lambda}-\lambda_i| =   \dfrac{\lambda_{\mathrm{max}}(A) - \lambda_{\mathrm{min}}(A)}{2},
\end{equation} 
and the proof is complete. 
\end{proof}

\newpage
\noindent {\bf Theorem \ref{thm: backprop-descent}.} \textit{If  Assumptions \ref{ass: T-contraction}, \ref{ass: smooth-mappings}, \ref{ass: independent-theta}, and \ref{ass: good-conditioning} hold for given weights $\Theta$ and data $d$,    then
    \begin{equation}
        p_\Theta \triangleq 
        - \dd{}{\Theta}\Big[ \ell(y_d, S_\Theta(T_\Theta(u,d))\Big]_{u=u_d^\star}
        \label{eq: descent-direction-appendix}
    \end{equation} 
    forms a descent direction for $\ell(y_d,\sN_\Theta(d))$ with respect to $\Theta$. }
\begin{proof} 
To complete the proof, it suffices to show
\begin{equation}
    \left< \dd{\ell}{\Theta}, p_\Theta \right> 
    < 0,
    \ \ \ \mbox{for all} \  \ 
    \dd{\ell}{\Theta} \neq 0.
    \label{eq: proof-descent-inequality}
\end{equation} 
Let any weights $\Theta$ and data $d$ be given, and assume the gradient $\mathrm{d}\ell/\mathrm{d}\Theta$ is nonzero. We proceed in the following manner. First we show  $p_\Theta$ is equivalent to a preconditioned gradient (Step 1). We then\\[1.5pt] show  $M^\top \mathrm{d}\ell /\mathrm{d}\Theta$ is nonzero, with $M$ as in (\ref{eq: M-def-gradient}) of Assumption \ref{ass: good-conditioning} (Step 2). These two results are then combined to verify the descent inequality (\ref{eq: proof-descent-inequality}) for the provided $\Theta$ and $d$ (Step 3).\\

\noindent {\bf Step 1.} Denote the dimension of each component of the gradient $\mathrm{d}\ell / \mathrm{d}\Theta$ using\footnote{We assumed each space is a real-valued finite dimensional Hilbert space, making it equivalent to some Euclidean space. So, it suffices to show everything in Euclidean spaces.}
\begin{equation}
    \pp{T_\Theta}{\Theta} \in \bbR^{p\times n},\ \ \  \
    \sJ_\Theta^{-1} \in \bbR^{n\times n}, \ \ \ \ 
    \pp{S_\Theta}{\Theta} \in \bbR^{p\times c}, \ \ \ 
    \dd{S_\Theta}{u} \in \bbR^{n\times c},\ \ \ \
    \pp{\ell}{y} \in \bbR^{c\times 1}.
\end{equation}
Combining each of these terms yields the gradient expression\footnote{In the main text, the ordering was used to make clear application of the chain rule, but here we reorder terms to get consistent dimensions in each matrix operation.}
\begin{equation}
    \dd{\ell}{\Theta}
    = \left[ \pp{T_\Theta}{\Theta}
    \sJ_\Theta^{-1} 
    \dd{S_\Theta}{u}  + \dd{S_\Theta}{\Theta} \right]
    \pp{\ell}{y}.
\end{equation}
By Assumption \ref{ass: independent-theta}, $S_\Theta$ and $T_\Theta$ depend on separate components of $\Theta = (\theta_S,\theta_T)$.
Thus,  
\begin{equation}
    \dd{\ell}{\Theta}
    = \left[ \begin{array}{c} 
    \pp{S_\Theta}{\theta_S}  \\[3pt]
     \pp{T_\Theta}{\theta_T}
    \sJ_\Theta^{-1} 
    \dd{S_\Theta}{u} \end{array} \right]
    \pp{\ell}{y}
    = \underbrace{\left[ \begin{array}{cc}
     \pp{S_\Theta}{\theta_S} & 0  \\
     0 & \pp{T_\Theta}{\theta_T}
    \end{array}\right]}_{M}
     \underbrace{\left[\begin{array}{cc} 
     \mathrm{I} & 0 \\
     0 &  \sJ_\Theta^{-1} 
     \end{array}\right]}_{ \tilde{\sJ}_\Theta^{-1} }
    \underbrace{\left[\begin{array}{c}
     \mathrm{I} \\  \dd{S_\Theta}{u}   
    \end{array}\right]
    \pp{\ell}{y}}_{ v},
\end{equation}
where we define\footnote{Note this choice of $M$ coincides with the matrix $M$ in Assumption \ref{ass: good-conditioning}.} $M \in \bbR^{p\times (n+c)}$, $\tilde{\sJ}_\Theta^{-1}\in \bbR^{(n+c)\times (n+c)}$, and $v\in \bbR^{(n+c)\times 1}$ to be the underbraced quantities. This enables the gradient to be concisely expressed via the relation
\begin{equation}
    \dd{\ell}{\Theta} =  M \tilde{\sJ}_\Theta^{-1}v,
\end{equation}
and our proposed gradient alternative in (\ref{eq: descent-direction-appendix})  is given by
\begin{equation}
    p_\Theta = - Mv.
\end{equation}
Because $M$ has full column rank (by Assumption \ref{ass: good-conditioning}), $M^+ M = \mathrm{I}$,  enabling us to rewrite $p_\Theta$ via
\begin{equation}
    p_\Theta = -M \tilde{\sJ}_\Theta  M^+ M \sJ_\Theta^{-1} v
    = - { (M\tilde{\sJ}_\Theta M^+)} \dd{\ell}{\Theta}.
    \label{eq: p-theta-conjugate-form}
\end{equation}
Hence $p_\Theta$ is a preconditioned gradient ({\em n.b.} the preconditioner  is not necessarily symmetric). \\

\noindent{\bf Step 2.} Set
\begin{equation}
    w \triangleq M^\top \dd{\ell}{\Theta} =  M^\top M \tilde{\sJ}_\Theta^{-1} v.
\end{equation}
The fact that $M$ has full column rank   implies it has a trivial kernel. In particular,
\newcommand{\RA}{\ \ \ \Longrightarrow \ \ \ }
\begin{equation}
    0 \neq \dd{\ell}{\Theta} = M\tilde{\sJ}_\Theta^{-1} v
    \RA
    0 \neq \tilde{\sJ}_\Theta^{-1} v.
    \label{eq: w-neq-0-warm-up}
\end{equation}
Again leveraging the full column rank of $M$, we know $M^\top M$ is invertible and, thus, has trivial kernel as well. This fact together with (\ref{eq: w-neq-0-warm-up}) reveals
\begin{equation}
    0 \neq (M^\top M) \tilde{\sJ}_\Theta^{-1} v = w.
    \label{eq: w-neq-0}
\end{equation}

\noindent{\bf Step 3.} Inserting the definition of $w$ and $p_\Theta$ formulation of (\ref{eq: p-theta-conjugate-form})  into the scalar product in (\ref{eq: proof-descent-inequality}) yields
\begin{equation}
    \left< \dd{\ell}{\Theta}, p_\Theta\right>
    = -\left< M^\top M \tilde{\sJ}_\Theta^{-1} v, \tilde{\sJ}_\Theta  M^+ M \tilde{\sJ}_\Theta^{-1} v \right> \\
    = -\left< w, \tilde{\sJ}_\theta(M^\top M)^{-1}  w\right>,
    \label{eq: proof-key-1}
\end{equation}
noting $M^+ = (M^\top M)^{-1} M^\top$.
Let $\lambda_+$ and $\lambda_-$ be the maximum and minimum eigenvalues of $(M^\top M)^{-1}$, respectively.
Note $(M^\top M)$ is positive definite since the full column rank of $M$   implies
\begin{equation}
    \left< \xi, M^\top M\xi\right>
    = \|M\xi\|^2 > 0,\ \ \ \mbox{for all nonzero $\xi\in\bbR^{n+c}$.}
\end{equation}
Thus, $(M^\top M)^{-1}$ is positive definite,  making $\lambda_+, \lambda_- > 0$. Let $\overline{\lambda}$   be the average of these terms, \ie 
\begin{equation}
    \overline{\lambda} \triangleq \dfrac{\lambda_++\lambda_-}{2}.
\end{equation}
Substituting in this choice of $\overline{\lambda}$ to (\ref{eq: proof-key-1}) by adding and subtracting $\overline{\lambda}\mathrm{I}$ gives the inequality
\begin{equation}
    -\left< w, \tilde{\sJ}_\theta(M^\top M)^{-1}  w\right>
   \leq - \overline{\lambda} (1-\gamma)\|w\|^2 + \left< w , \tilde{\sJ}_\Theta (\overline{\lambda}\mathrm{I} - (M^\top M)^{-1}) w\right>,
   \label{eq: proof-product-inequality}
\end{equation}
noting $\tilde{\sJ}_\Theta$ is $1-\gamma$ coercive because it is the block diagonal composition of $\sJ_\Theta$, which is $1-\gamma$ coercive by (\ref{eq: coercivity-J}  ) in Lemma \ref{lemma: J-coercive}, and the identity matrix, which is 1-coercive.
Application of the Cauchy Schwarz inequality to the right hand side of (\ref{eq: proof-product-inequality}) reveals
\begin{equation}
    -\left< w, \tilde{\sJ}_\theta(M^\top M)^{-1}  w\right> 
    \leq -\overline{\lambda} (1-\gamma)\|w\|^2 +  \|\tilde{\sJ}_\Theta\|\|\overline{\lambda} \mathrm{I} - (M^\top M)^{-1})\|\|w\|^2.
    \label{eq: proof-key-2}
\end{equation}
By Lemma \ref{lemma: symmetric-matrix-norm},
\begin{equation}
    \|\overline{\lambda} \mathrm{I} - (M^\top M)^{-1}\| = \dfrac{\lambda_+ - \lambda_-}{2} .
    \label{eq: proof-key-3}
\end{equation} 
Similar block diagonal argument as used above to verify $\tilde{\sJ}_\Theta$ is coercive can also be applied to bound the operator norm of $\tilde{\sJ}_\Theta$. Indeed, (\ref{eq: T-op-norm-gamma}) implies 
\begin{equation}
    \|\sJ_\Theta\| \leq 1+ \gamma
    \ \ \ \Longrightarrow \ \ \ 
     \|\tilde{\sJ}_\Theta\| \leq 1+\gamma.
     \label{eq: proof-key-4}
\end{equation}
Hence   (\ref{eq: proof-key-1}), (\ref{eq: proof-key-2}),   (\ref{eq: proof-key-3}), and (\ref{eq: proof-key-4})  together yield
\begin{equation}
     \left< \dd{\ell}{\Theta}, p_\Theta\right>
     \leq -\dfrac{1}{2} \big( (1-\gamma)(\lambda_+ + \lambda_-) -   (1+\gamma)(\lambda_+-\lambda_-)\big) \|w\|^2
     = -2(\lambda_- - \gamma \lambda_+) \|w\|^2.
     \label{eq: proof-key-5}
\end{equation}
The right hand expression in (\ref{eq: proof-key-5}) is negative since (\ref{eq: w-neq-0}) shows $w\neq 0$ and the conditioning inequality (\ref{eq: gradient-conditioning}) in Assumption \ref{ass: good-conditioning} implies  $(\lambda_- - \gamma \lambda_+)$ is positive. This verifies (\ref{eq: proof-descent-inequality}), completing the proof.
\end{proof}

\newpage 
{\bf Corollary \ref{cor: perturbation-resilience}.}
\textit{Given weights $\Theta$ and data $d$, there exists $\varepsilon > 0$ such that
if $u^\varepsilon\in\sU$ satisfies $\|u_d^\varepsilon- u_d^\star\| \leq \varepsilon$ and the assumptions of Theorem \ref{thm: backprop-descent} hold, then  
    \begin{equation}
        p_\Theta^\varepsilon \triangleq -- \dd{}{\Theta}\Big[ \ell(y_d, S_\Theta(T_\Theta(u,d))\Big]_{u=u_d^\varepsilon}
    \end{equation} 
    is a descent direction for the loss function $\ell(y_d, \sN_\Theta(u_d^\star,d))$ with respect to $\Theta$.   }
 \begin{proof}
 For notational convenience, for all $\tilde{u}\in\sU$, define
 \begin{equation}
     p_\Theta(\tilde{u}) \triangleq - \dd{}{\Theta}\Big[ \ell(y_d, S_\Theta(T_\Theta(u,d))\Big]_{u=\tilde{u}}
 \end{equation}
 noting $p_\Theta^\varepsilon = p_\Theta(u_d^\varepsilon)$.
 Also define the quantity
 \begin{equation}
    \nabla 
    \triangleq   \dd{}{\Theta}\left[ \ell(y_d, \sN_\Theta(  d)) \right].
 \end{equation}
 Assuming $\nabla \neq 0$, it suffices to show
 \begin{equation}
     \left<  p_\Theta^\varepsilon , \nabla  \right> < 0.
     \label{eq: corollary-proof-0}
 \end{equation}
 By the smoothness of $\ell$, $S_\Theta$, and $T_\Theta$ (see Assumption \ref{ass: smooth-mappings}), there exists $\delta > 0$ such that
 \begin{equation}
     \|u - u_d^\star\| \leq \delta
     \ \ \ \Longrightarrow \ \ \ 
     \left\| p_\Theta(u) - p_\Theta(u_d^\star) \right\|
     \leq \dfrac{(\lambda_- - \gamma \lambda_+) \|M^\top \nabla\|^2}{\|\nabla\|},
     \label{eq: corollary-proof-1}
 \end{equation}
 where $\lambda_+$ and $\lambda_-$ are the maximum and minimum eigenvalues of $(M^\top M)^{-1}$, respectively. Also note $M^\top \nabla \neq 0$ since $M^\top$ has full column rank.\footnote{See $w$ in Step 2 of the proof of Theorem \ref{thm: backprop-descent}.}
 Substituting   the inequality (\ref{eq: proof-key-5}) in the proof of Theorem \ref{thm: backprop-descent}  into (\ref{eq: corollary-proof-0}) reveals
 \begin{subequations}
 \begin{align}
     \left<  p_\Theta(u) , \nabla \right>
     &= \left< p_\Theta(u_d^\star), \nabla\right> + \left< p_\Theta(u) - p_\Theta(u_d^\star), \nabla \right>\\
     &\leq -2(\lambda_- - \gamma \lambda_+) \|M^\top \nabla \|^2  + \left< p_\Theta(u) -p_\Theta(u_d^\star), \nabla \right>.      
     \end{align}
     \label{eq: corollary-proof-3}
 \end{subequations}
 But, the Cauchy Schwarz inequality and (\ref{eq: corollary-proof-1}) enable us to obtain the upper bound
 \begin{equation}
     |\left< p_\Theta(u) - p_\Theta(u_d^\star), \nabla \right>|
     \leq (\lambda_- - \gamma \lambda_+) \|M^\top \nabla \|^2 ,
     \ \ \ \mbox{for all $u \in B(u_d^\star,\delta)$,}
    \label{eq: corollary-proof-2}
 \end{equation}
 where $B(u_d^\star,\delta)$ is the ball of radius $\delta$ centered about $u_d^\star$.
  Combining (\ref{eq: corollary-proof-3}) and (\ref{eq: corollary-proof-2}) yields
  \begin{equation}
     \left<  p_\Theta(u) , \nabla \right> \leq -(\lambda_- - \gamma \lambda_+) \|M^\top \nabla \|^2,
      \ \ \  \mbox{for all $u \in B(u_d^\star,\delta)$.}
  \end{equation}
  In particular, this shows (\ref{eq: corollary-proof-0}) holds when we set $\varepsilon=\delta$. 
\end{proof}

\section{Classification Accuracy Plots} \label{app: classification-plots}

\begin{figure}[H]
    \centering
        \includegraphics[width = 0.47\textwidth]{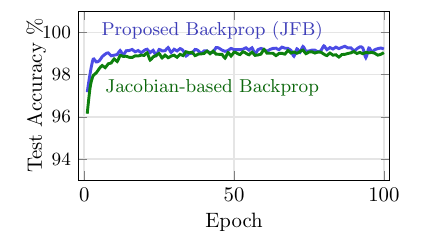} 
        \hspace{-20pt}
        \includegraphics[width = 0.47\textwidth]{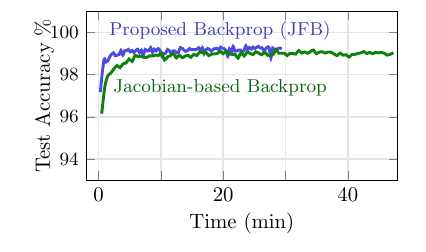}
        \vspace*{-8pt}
    \caption{MNIST performance using nearly identical  architectures/configurations, but with two backpropagation schemes: our proposed method   (blue) and the standard Jacobian-based backpropagation in (\ref{eq: standard-implicit-gradient}) (red), with fixed point tolerance $\epsilon = 10^{-4}$. The difference in the  architecture/configurations comes from the use of batch normalization in the latent space when using JFB (see Appendix~\ref{app: experimental-settings} for more details).
    Our method is faster and yields better test accuracy.}
    \label{fig: comparison-MNIST}
\end{figure}

\begin{figure}[H]
    \centering
        \includegraphics[width = 0.47\textwidth]{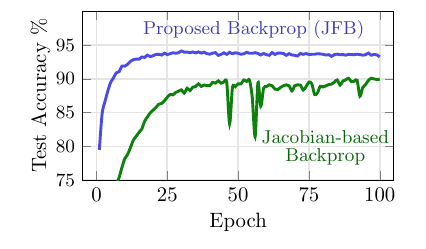} 
        \hspace{-20pt}
        \includegraphics[width = 0.47\textwidth]{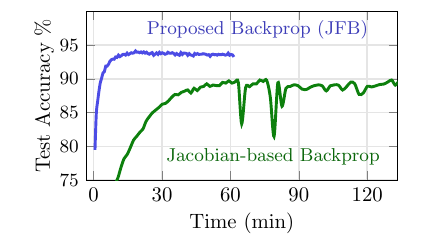}
        \vspace*{-8pt}
    \caption{SVHN performance using identical  architecture/configurations, but with two backpropagation schemes: our proposed method   (blue) and the standard Jacobian-based backpropagation in (\ref{eq: standard-implicit-gradient}) (red), with fixed point tolerance $\epsilon = 10^{-4}$.  
    The difference in the  architectures/configurations comes from the use of batch normalization in the latent space when using JFB (see Appendix~\ref{app: experimental-settings} for more details).
    Our method is faster and yields better test accuracy.}
    \label{fig: comparison-SVHN}
\end{figure}

\section{Implementation of Jacobian-based Backpropagation}
\label{app: Jacobian_implementation_details}
\subsection{Implementation Notes}
In this section, we provide some notes to help understand the code/implementation of the Jacobian-based backpropagation in PyTorch.
Assume we have the fixed point $\tilde{u}$ at hand. For brevity, we will omit the dependence of $R_\Theta$ and $\tilde{u}$ on $d$.
We wish to compute 
\begin{align}
    \dd{}{\Theta}\Big[ \ell(y_d, S_\Theta(R_\Theta(\tilde{u}_\Theta))\Big] 
    &= \dd{\ell}{S} \left[ \dd{S}{\tilde{u}} \dd{\tilde{u}}{\Theta} + \frac{\partial S}{\partial \Theta} \right],
\end{align}
where
\begin{equation}
    \dd{\tilde{u}}{\Theta} = \sJ^{-1} \dd{R_\Theta(\tilde{u})}{\Theta},
\end{equation}
and the argument $\tilde{u}$ inside of $R$ is treated as a constant.
This implies that 
\begin{equation}
    \dd{}{\Theta}\Big[ \ell(y_d, S_\Theta(R_\Theta(\tilde{u}_\Theta))\Big] 
    = 
    \dd{\ell}{S} \left[ \dd{S}{\tilde{u}} \sJ^{-1} \dd{R_\Theta(\tilde{u})}{\Theta} + \frac{\partial S}{\partial \Theta} \right].
\end{equation}
In our PyTorch implementation, we do not build $J^{-1}$ explicitly. Instead, we solve a linear system as follows. We would like to compute $w$ defined by
\begin{equation}
    w = \dd{\ell}{S} \dd{S}{\tilde{u}} \sJ^{-1}.
\end{equation}
To do this, we solve the following linear system
\begin{equation}
    w \sJ = \dd{\ell}{S} \dd{S}{\tilde{u}}
\end{equation}
Note, we consider multiplication by matrices from the right as this is more natural to implement in PyTorch. We also note that building the matrix $\sJ$ explicitly is inefficient, thus any matrix-factorization methods (\eg the $LU$ decomposition) cannot be used.
As explained in Section~\ref{sec: experiments}, we use a CG method and require a symmetric coefficient matrix. To this end, we symmetrize the system by multiplying by $J^\top$ on both sides to obtain the normal equations~\cite{golub2013matrix}
\begin{equation}
    \label{eq: normal_equations}
    w \sJ \sJ^\top = \dd{\ell}{S} \dd{S}{\tilde{u}} \sJ^\top.
\end{equation}
Once we solve for $w$, we can then arrive at the gradient by computing 
\begin{equation}
    \dd{}{\Theta}\Big[ \ell(y_d, S_\Theta(R_\Theta(\tilde{u}_\Theta))\Big] = w \dd{R_\Theta(\tilde{u})}{\Theta} + \dd{\ell}{S} \frac{\partial S}{\partial \Theta}.
\end{equation}
\ \\[0.5in]
 
\subsection{Coding right-hand-side}
To code the right-hand-side of the normal equations, we can code $\dd{\ell}{S} \dd{S}{\tilde{u}}$ in the following line of code:

\begin{tcolorbox}[colback=white,colframe=black!30!white,title=Computing $\dd{\ell}{S} \dd{S}{\tilde{u}}$, coltitle=black, left=1mm]
\vspace*{-5pt}
\begin{verbatim}  
Qd = net.data_space_forward(d)
Ru = net.latent_space_forward(u, Qd)
S_Ru = net.map_latent_to_inference(Ru)
loss = criterion(S_Ru, labels)
dldu = torch.autograd.grad(outputs=loss, inputs=Ru,
                           retain_graph=True, create_graph=True,
                           only_inputs=True)[0]
\end{verbatim} 
\end{tcolorbox}
Next, we would like to multiply \texttt{dldu} by $J^\top$ from the right side. To do this, we need to use a vector-Jacobian trick in Pytorch as follows:
\begin{tcolorbox}[colback=white,colframe=black!30!white,title=Computing $\dd{\ell}{S} \dd{S}{\tilde{u}} J^\top$, coltitle=black, left=1mm]
\vspace*{-5pt}
\begin{verbatim}  
dldu_dRdu = torch.autograd.grad(outputs=Ru, inputs = u, grad_outputs=dldu, 
                                retain_graph=True, create_graph=True,
                                only_inputs=True)[0]
dldu_J = dldu - dldu_dRdu

dldu_JT = torch.autograd.grad(outputs=dldu_J, inputs=dldu, grad_outputs=dldu, 
                              retain_graph=True, create_graph=True,
                              only_inputs=True)[0]

rhs = dldu_JT
\end{verbatim} 
\end{tcolorbox}
Here, to multiply by $J^\top$ from the right, we note that for any vector $v$, 
\begin{equation}
    \label{eq: autograd_trick}
    v \dd{(vJ)}{v} = v J^\top.
\end{equation}
The vector-Jacobian trick uses autograd once to compute $vJ$, and then autograd once more compute $vJ^\top$ as in Equation~\eqref{eq: autograd_trick}.
Thus, we have that \texttt{rhs} takes the value of $\dd{\ell}{S} \dd{S}{\tilde{u}} J^\top$.

\subsection{Coding right matrix-vector multiplication by $J J^\top$}
Next, we want to implement a function that computes right matrix-vector multiplication by $J J^\top$. This function, along with the right-hand-side, is then fed into the conjugate gradient algorithm to solve Equation~\eqref{eq: normal_equations}.

Given a vector $v$, the task is to return $v J J^\top$. First, we use one autograd call to obtain $vJ$. Then we use another autograd call to multiply by $J^\top$ to obtain $vJJ^\top$. The function which multiplies by $JJ^\top$ from the right can thus be coded as
\begin{tcolorbox}[colback=white,colframe=black!30!white,title= Computing multiplication by $JJ^\top$, coltitle=black, left=1mm]
\vspace*{-5pt}
\begin{verbatim}  
v_dRdu = torch.autograd.grad(outputs=Ru, inputs=u, grad_outputs=v, 
                             retain_graph=True, create_graph=True,
                             only_inputs=True)[0]
v_J = v - v_dRdu

v_JJT = torch.autograd.grad(outputs=v_J, inputs=v, grad_outputs=v_J, 
                            retain_graph=True, create_graph=True,
                            only_inputs=True)[0]
\end{verbatim} 
\end{tcolorbox}
We emphasize here that the third line returns $vJJ^\top$ by setting the variable \texttt{grad\_outputs} to be $vJ$.
Finally, we feed the computed right-hand-side and the function that multiplies by $JJ^\top$ into the conjugate gradient method to solve for $w$ in Equation~\eqref{eq: normal_equations}.
\subsection{Coding $w \dd{R_\Theta(\tilde{u})}{\Theta}$ and $\dd{\ell}{S} \frac{\partial S}{\partial \Theta}$}
Once $w$ is obtained from the linear solve, we have two remaining tasks to obtain the gradient: computation of $w \dd{R_\Theta(\tilde{u})}{\Theta}$ and $\dd{\ell}{S} \frac{\partial S}{\partial \Theta}$.
These can be computed as follows in the PyTorch framework. Suppose the solution to the normal equations is saved in the variable \texttt{normal\_eq\_sol}
\begin{tcolorbox}[colback=white,colframe=black!30!white,title= Update gradients, coltitle=black, left=1mm]
\vspace*{-5pt}
\begin{verbatim}  
Ru.backward(normal_eq_sol) 
S_Ru = net.map_latent_to_inference(Ru.detach())
loss = criterion(S_Ru, labels)
loss.backward()    
\end{verbatim}
\end{tcolorbox}
This is only one (perhaps the most straightforward) way to code the Jacobian-based backpropagation. But as can be seen, coding the Jacobian-based backpropagation is not trivial, unlike our proposed JFB.

\section{Comparison with Neumann RBP}
\label{app: Neumann_experiments}
Below is a comparison of JFB with 5th and 10th order Neumann series approximations of   gradients   for the SVHN dataset. 

\begin{figure}[H]
    \centering
    \includegraphics[width=0.47\textwidth]{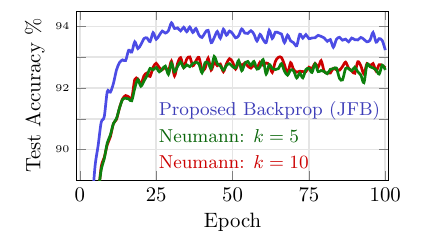}  
    \includegraphics[width=0.47\textwidth]{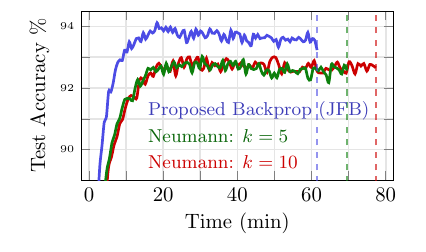}
    \caption{SVHN using different Neumann approximations of the inverse Jacobian.}
    \label{fig: comparison-SVHN-Neumann}
\end{figure}

\subsection{Neumann Gradient Implementation}
To compute the Neumann-based gradient, we use a similar approach that explained in Appendix~\ref{app: Jacobian_implementation_details}. In particular, we use a for-loop to accumulate the Neumann sum in the variable \texttt{dldu\_Jinv\_approx}.
\begin{tcolorbox}[colback=white,colframe=black!30!white,title= Computing Neumann gradient, coltitle=black, left=1mm]
\begin{verbatim}
for i in range(1, neumann_order+1):

    dldu_dRdu_k.requires_grad = True

    # compute dldu_dRdu_k+1 = dldu_dRdu_k * dRdu
    dldu_dRdu_kplus1 = torch.autograd.grad(outputs=Ru,
                                           inputs=u,
                                           grad_outputs=dldu_dRdu_k,
                                           retain_graph=True,
                                           create_graph=True,
                                           only_inputs=True)[0]

    dldu_Jinv_approx = dldu_Jinv_approx + dldu_dRdu_kplus1.detach()

    dldu_dRdu_k = dldu_dRdu_kplus1.detach()

Ru.backward(dldu_Jinv_approx)
\end{verbatim} 
\end{tcolorbox}
Similar to the Jacobian-based approach, we multiply from right by $\frac{\partial R}{\partial \Theta}$ from the right in the last line.


\section{Experimental Settings} \label{app: experimental-settings}
We present the experimental settings and describe the architecture used for each dataset. 
We used ResNets with batch normalization in the latent space portion of the networks, \ie, $R_{\Theta}(d)$. 
While batch normalization prevents us from completely guaranteeing the network is $\gamma$-contractive in its latent variable, we found the networks automatically behave in a contractive manner. Specifically, every time the network is evaluated during training, we check whether our network violates the $\gamma$-contractive property and print a warning when this is the case. This warning was never called in our experiments.
As mentioned in~\eqref{subsec: adjoint-comparison}, the Jacobian-based version failed to converge (even with tighter tolerance and more iterations) when batch normalization was present in the latent space - this is an issue also observed in other implicit networks literature~\cite{bai2020multiscale}. Consequently, we remove the batch normalization for the Jacobian-based runs.
We train all of our networks with the Adam optimizer~\cite{kingma2015adam} and use the cross entropy loss function.

\subsection{MNIST}
We use two convolutions with leaky relu activation functions and max pooling for the data-space portion of the network $Q_{\Theta}(d)$.
In the latent space portion, $R_{\Theta}(d)$, we use 2-layer ResNet-based architecture, with the ResNet block containing two convolution operators with batch normalization. 
Finally, we map from latent space to inference space using one convolution and one fully connected layer.
For the fixed point stopping criterion, we stop whenever consecutive iterates satisfy $\|u^{k+1} - u^{k}\| <\epsilon=10^{-4}$ or 50 iterations have occurred.
We use a constant learning rate of $10^{-4}$.

\subsection{SVHN}
We use three 1-layer ResNets with residual blocks containing two convolutions with leaky relu activation functions and max pooling for the data-space portion of the network $Q_{\Theta}(d)$.
Similarly to the ResNet-based network in MNIST, we use a ResNet block containing two convolution operators with batch normalization in the latent space portion $R_{\Theta}(d)$.
We map from latent space to inference space using one convolution and one fully connected layer.
For the fixed point stopping criterion, we stop whenever consecutive iterates satisfy $\|u^{k+1} - u^{k}\| <\epsilon=10^{-4}$ or 200 iterations have occurred.
We use constant learning rate of $10^{-4}$ with weight decay of $2 \times 10^{-4}$. 
\subsection{CIFAR10}

We use a ResNet with residual blocks containing two convolutions for the data-space portion of the network $Q_{\Theta}(d)$.
We use a ResNet for the latent space portion of $R_{\Theta}(d)$, with each ResNet block containing two convolution operators with batch normalization.
Approximately $70\%$ of the weights are in $Q_\Theta$ and $30\%$ of the weights are in $R_\Theta$. 
We map from latent space to inference space using one convolution and one fully connected layer. 
For the JFB fixed point stopping criterion, we stop whenever consecutive iterates satisfy $\|u^{k+1} - u^{k}\| <\epsilon=10^{-1}$ or 50 iterations have occurred.
For the Jacobian-based approach, however, we observed that we needed to tighten the tolerance in order for the gradients to be computed accurately. Particularly, we stop whenever consecutive iterates satisfy $\|u^{k+1} - u^{k}\| <\epsilon=10^{-4}$ or 500 iterations have occurred.

\section{Toy Implicit Example} \label{app: toy_example}

This section provides rigorous justification of the toy example provided in Section \ref{sec: why-implicit} for solving $y=d+y^5$ with a given  $d\in[-1/2,1/2]$. See Figure \ref{fig: toy example} for an illustration. We first outline its implicit solution in the following lemma, which also establishes this equation has a unique solution in $[-10^{-1/4},10^{-1/4}]$.  This is followed by a brief discussion of the explicit series representations of solutions to (\ref{eq: toy_manifold}).

\begin{figure}[H]
    \centering
    \includegraphics[width=.5\textwidth]{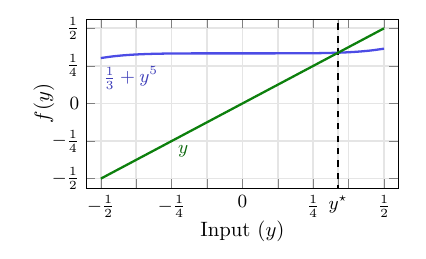}
    \caption{Plot of the functions $y$ and $d+y^5$ with $d=1/3$. }
    \label{fig: toy example}
\end{figure}

\begin{lemma}
    Let $d \in [-1/2,1/2]$. If the sequence $\{y_k\} \subset \bbR$ is defined such that
      $y_1 = 0$ and 
    \begin{equation}
        y_{k+1} = T(y_k;d) \triangleq d + y_k^5
    \end{equation}
    then $\{y_k\}$ converges to the unique fixed point of $T(\cdot;d)$  among $y\in  [-10^{-1/4},10^{-1/4}]$.
\end{lemma}
\begin{proof}
    We proceed in the following manner. First  $T(\cdot;d)$ is shown to a contraction on a restricted subset of $\bbR$ (Step 1). Then we show $\{y_k\}$ is a subset of this restricted subset (Step 2). These two facts together enable us to obtain convergence (Step 3) and uniqueness (Step 4), using a special case of Banach's fixed point theorem \cite{banach1922operations}. \\
    
    \noindent {\bf Step 1.} Set $\alpha = 10^{-1/4}$ and let $y,\gamma \in [-\alpha,\alpha]$. By the mean value theorem, there exists $\xi$  between $y$ and $\gamma$ such that
    \begin{equation}
        |y^5 - \gamma^5|
        = |T(y;d) - T(\gamma; d) |
        = \left|\dd{T}{u}(\xi;d)\right| | y - \gamma|.
    \end{equation}
    Additionally, 
    \begin{equation}
        \sup_{\xi \in [-\alpha,\alpha]} \left| \dd{T}{u}(\xi;d)\right|
        = \sup_{\xi \in [-\alpha,\alpha]} 5y^4
        = 5\alpha^4
        \leq 5 \cdot   \dfrac{1}{10} 
        = \dfrac{1}{2},
    \end{equation}
    and so
    \begin{equation}
        |y^5 - \gamma^5| \leq \dfrac{1}{2}|y-\gamma|.
    \end{equation}
    Because $y$ and $\gamma$ were arbitrarily chosen in $[-\alpha,\alpha]$, it follows that the restriction of $T(\cdot;d)$ to $[-\alpha,\alpha]$ is a $\frac{1}{2}$-contraction.\\
    
    \noindent {\bf Step 2.} This step proceeds by induction. Note $y_1 = 0 \in [-\alpha,\alpha]$.
    Inductively, suppose $y_k \in \bbN$. This implies
    \begin{equation}
        |y_{k+1}|
        = |T(y_k;d)|
        = |d + y_k^5|
        \leq |d| + |y_k|^5
        \leq  \dfrac{1}{2} + \alpha^5
        < \alpha,
    \end{equation}
    and so $y_{k+1} \in [-\alpha,\alpha]$. By the principle of mathematical induction, we deduce $y_k \in [-\alpha,\alpha]$ for all $k\in\bbN$. \\ 
    
    \noindent {\bf Step 3.} We now establish convergence. Applying the results of Step 1 and Step 2 reveals
    \begin{equation}
        |y_{k+2}-y_{k+1}| = |T(y_{k+1}) - T(y_{k})| \leq \dfrac{1}{2}|y_{k+1}-y_k|, \ \ \ \mbox{for all $k\in\bbN$.}
    \end{equation}
    Applying this result inductively with the triangle inequality reveals $m > n$ implies
    \begin{equation}
        |y_m - y_n| \leq \sum_{\ell=n}^{m-1} |y_{\ell+1} - y_\ell|
        \leq \sum_{\ell=n}^{m-1} 2^{-\ell} |y_{2} - y_1|
        \leq 2^{-n} |y_{2} - y_1| \cdot \sum_{\ell=0}^{\infty} 2^{-\ell}  
        \leq 2^{1-n}  |y_2-y_1|.
        \label{eq: toy-lemma-proof-1}
    \end{equation}
    Since the right hand side in (\ref{eq: toy-lemma-proof-1}) converges to zero as $n\rightarrow\infty$, we see $\{y_k\}$ is Cauchy and, thus, converges to a limit $y_\infty$. Moreover, the limit satisfies
    \begin{equation}
        y_\infty 
        = \lim_{k\rightarrow\infty} y_k 
        = \lim_{k\rightarrow\infty} T(y_k; d)
        = \lim_{k\rightarrow\infty} d + y_k^5
        = d + y_\infty^5.
    \end{equation}
    
    \noindent {\bf Step 4.} All that remains it to verify the fixed point of $T(\cdot;d)$ is unique over $[-\alpha,\alpha]$. If a fixed point $\tilde{y} \in \fix{T(\cdot;d)}$ were to exist such that $\tilde{y}\in [-\alpha,\alpha]-\{ y_\infty\}$, then the contractive property of $T(\cdot;d)$ may be applied to deduce
    \begin{equation}
        |y_\infty - \tilde{y}|
        = |T(y_\infty;d) - T(\tilde{y};d)|
        \leq \dfrac{1}{2}| y_\infty - \tilde{y}| 
        \ \ \implies \ \ 
        1 < \frac{1}{2},
    \end{equation}
    a contradiction. Hence the fixed point $y_\infty$ is unique.
\end{proof}

\subsection{Explicit Solution}
As is well-known, the solution of a quintic equation cannot be expressed as a function of the coefficients using only the operations of addition, subtraction, multiplication, division and taking roots \cite{abel1826demonstration}. The simplest way to express the unique root to $y=d+y^5$ lying in the interval $[-10^{-1/4}, 10^{-1/4}]$ as a function of $d$ is via a hypergeometric series by writing
\begin{equation}
y = d\left[_4F_3\left(\frac{1}{5},\frac{2}{5},\frac{3}{5},\frac{4}{5};\frac{1}{2},\frac{3}{4},\frac{5}{4};\frac{3125 d^4}{256}\right) \right]  = d + d^5 + 10\frac{d^9}{2!} + 210 \frac{d^{13}}{3!} + \ldots  
\end{equation}
See \cite{birkeland1927auflosung} or \cite{ottem2011solving} for further information on solving quintic equations using hypergeometric functions. 
\end{document}